\newtheorem{proposition}{Proposition}
\newtheorem{definition}{Definition}
\newtheorem{example}{Example}
\newtheorem{theorem}{Theorem}
\newcommand{\myParagraph}[1]{\smallskip\noindent\textit{#1}}
\tikzstyle{block} = [draw, rectangle, minimum height=1.8em, minimum width=9em, node distance=3.5em, transform shape]
\newcommand{\wasp}{\textsc{wasp}\xspace}
\newcommand{\dwasp}{\textsc{dwasp}\xspace}
\newcommand{\dwaspgui}{\textsc{dwasp-gui}\xspace}
\newcommand{\gringo}{\textsc{gringo}\xspace}
\newcommand{\gringowrapper}{\textsc{gringo-wrapper}\xspace}
\newcommand{\aspide}{\textsc{aspide}\xspace}
\newcommand{\pr}{\Pi}                        
\newcommand{\dpr}[2][]{%
	\ifthenelse{\equal{#1}{}}
		{\Delta_{#2}}
		{\Delta^{#1}_{#2}}
	}       
\newcommand{\gpr}{\pr^G}                     
\newcommand{\bg}{\mathcal{B}}                
\newcommand{\debugground}[1]{\_debug(#1)}       
\newcommand{\debugempty}{\_debug(\ensuremath{\cdot})\xspace}       
\newcommand{\debug}[2]{\_debug(#1,#2)}       
\newcommand{\supportempty}{\_support(\ensuremath{\cdot})}
\newcommand{\support}[1]{\_support(#1)}      
\newcommand{\dnot}{\ensuremath{\raise.17ex\hbox{\ensuremath{\scriptstyle\mathtt{\sim}}}}\xspace}                   
\newcommand{\imp}{\leftarrow}                
\title[Debugging Non-Ground ASP Programs]
{Debugging Non-Ground ASP Programs: \\ Technique and Graphical Tools\footnote{This paper includes parts and significantly extends our previous work 
 \cite{DBLP:conf/lpnmr/DodaroGMRS15}.
}
}
\author[C. Dodaro et al.]{
	CARMINE DODARO\\
	DIBRIS, University of Genova, Italy\\
	\email{dodaro@dibris.unige.it}\\
	\and
	PHILIP GASTEIGER\\
	Alpen-Adria-Universit\"at Klagenfurt, Austria \\
	\email{philip.gasteiger@gmail.com}\\
	\and
	KRISTIAN REALE, FRANCESCO RICCA\\
	Department of Mathematics and Computer Science, University of Calabria, Italy \\
	\email{\{reale, ricca\}@mat.unical.it}\\
	\and
	KONSTANTIN SCHEKOTIHIN\\
	Alpen-Adria-Universit\"at Klagenfurt, Austria \\
	\email{konstantin.schekotihin@aau.at}\\
}
\begin{document}
\maketitle

\begin{abstract}
	Answer Set Programming (ASP) is one of the major declarative programming paradigms in the area of logic programming and non-monotonic reasoning.
	Despite that ASP features a simple syntax and an intuitive semantics, errors are common during the development of ASP programs.	
	In this paper we propose a novel debugging approach allowing for interactive localization of bugs in non-ground programs. 
	The new approach points the user directly to a set of non-ground rules involved in the bug, which might be refined (up to the point in which the bug is easily identified) by asking the programmer a sequence of questions on an expected answer set. 
	The approach has been implemented on top of the ASP solver \wasp. 
	The resulting debugger has been complemented by a user-friendly graphical interface, and integrated in \aspide, a rich IDE for answer set programs. 
	In addition, an empirical analysis shows that the new debugger is not affected by the grounding blowup limiting the application of previous approaches based on meta-programming.
	Under consideration in Theory and Practice of Logic Programming (TPLP).
	
	\begin{keywords}
		Answer Set Programming, Debugging, Graphical User Interface
	\end{keywords}
\end{abstract}

\section{Introduction}
\label{sec:introduction}
Answer Set Programming (ASP)~\cite{DBLP:journals/cacm/BrewkaET11,DBLP:journals/ngc/GelfondL91} is a declarative programming paradigm proposed in the area of logic programming and non-monotonic reasoning. 
ASP features an expressive language that can be used to model computational problems of comparatively high complexity~\cite{DBLP:journals/tods/EiterGM97} often in a rather compact way.
The availability of high-performance implementations~\cite{DBLP:conf/lpnmr/GebserMR15,DBLP:journals/aim/LierlerMR16,DBLP:journals/ai/CalimeriGMR16,DBLP:conf/aaai/GebserMR16} made ASP a valuable tool for developing complex applications in several research areas~\cite{DBLP:journals/aim/ErdemGL16}, including Artificial Intelligence~\cite{DBLP:conf/lpnmr/BalducciniGWN01,DBLP:conf/cpaior/AschingerDFGJRT11,DBLP:conf/rr/DodaroLNR15,DBLP:journals/fuin/AbseherGMSW16,DBLP:journals/tplp/DodaroGLMRS16}, Hydroinformatics~\cite{DBLP:journals/logcom/GavanelliNP15}, Nurse Scheduling~\cite{DBLP:conf/aiia/AlvianoDM17}, and Bioinformatics~\cite{DBLP:journals/tplp/ErdemO15,DBLP:journals/tplp/KoponenOJS15,DBLP:journals/tplp/GebserSTV11}, to mention a few.
Especially the development of real-world applications outlined the advantages of ASP from a software engineering viewpoint.
Namely, ASP programs are flexible, compact, extensible and easy to maintain~\cite{DBLP:conf/birthday/GrassoLMR11}.

Although the basic syntax of ASP is not particularly difficult, one of the most tedious and time-consuming  programming tasks is the identification of (even trivial) faults in a program.
For this reason, several methodologies and tools have been proposed in the last few years for debugging ASP programs~\cite{DBLP:conf/asp/BrainV05,DBLP:journals/tplp/PontelliSE09,DBLP:conf/lpnmr/OetschPT11,DBLP:conf/aaai/GebserPST08,DBLP:journals/tplp/OetschPT10,DBLP:conf/aaai/Shchekotykhin15} with the goal of making the process of developing logic programs more rapid and comfortable.
Given a faulty ASP program $\pr$ and a set of atoms $I$ representing an interpretation of the program, these approaches find an explanation why $I$ \emph{is not an answer set} of $\pr$.

The most prominent debugging approaches \cite{DBLP:conf/aaai/GebserPST08,DBLP:journals/tplp/OetschPT10} apply the notion of meta-programming that uses ASP itself to debug a faulty ASP program.
The basic idea of the meta-programming method is to convert the inputs into a program over a meta-language -- a reified program --  and then execute it together with a debugging program.
The latter finds causes of a fault, where each cause is encoded by specific atoms in an answer set of the debugging program.
However, reification-based debuggers have some issues that may make them either difficult to apply or even inapplicable in some cases.
The main issue, also observed in~\cite{DBLP:journals/tplp/OetschPT10}, is related to the computation of answer sets of the debugging program.
Namely, the grounding step of the solving process might produce ground instantiations of the debugging program, which sizes are exponentially larger than the input. 
This problem is intrinsic in the meta-programming approach, as it requires the ground debugging program to comprise a set of ground rules encoding all possible explanations of faults in the input program.
Moreover, even if answer sets of the reified program can be computed, their number can overwhelm a user, who has to analyze all explanations manually in order to find a true cause of the problem.
A recent approach suggested in \cite{DBLP:conf/aaai/Shchekotykhin15} allows for a partial resolution of this issue, but it is applicable only to ground programs, which often contains a large number of rules and, consequently, are hard to understand and to debug.

In this paper we propose a novel debugging approach allowing for interactive localization of bugs in non-ground ASP programs.
The new approach points the user directly to a set of non-ground rules involved in the bug, which might be refined (up to the point in which the bug is easily identified) by asking the programmer a sequence of questions on an expected answer set. 
Roughly, the suggested approach can be described as follows:
First, given a non-ground program $\pr$ the debugger generates a debugging program $\dpr{\pr}$ by adding marker atoms used later to match results of the debugger with rules of $\pr$; 
Next, the debugging program is grounded and passed, together with (the second input) an interpretation $I$, to a specifically-modified version of an ASP solver that determines the set of rules that are possible reasons for the bug.
As previous approaches, our debugger allows for a uniform treatment of \textit{over-constrained} and \textit{missing support} faults (see Section~\ref{sec:debugging-approach}).
%
%
Since the number of reasons might be large (up to covering the entire program), we help the user to find the \emph{guilty rules} by applying an automated refining technique.
In particular, the debugger automatically generates a sequence of queries, requiring the user to answer whether a set of literals must be included or not in an expected answer set.
This additional knowledge is then injected in the system and the process is repeated up to a point in which the (non-ground) rules causing a fault can easily be identified.

The approach has been implemented in the \dwasp debugger, that combines the grounder \gringo~\cite{DBLP:conf/lpnmr/GebserKKS11} with an extension of the ASP solver \wasp~\cite{DBLP:conf/lpnmr/AlvianoDLR15}.
The resulting implementation can be used via command-line interface.
%
%
In order to further ease the task of debugging logic programs, and appeal to those users that prefer graphical interfaces, we also developed a graphical user interface for the \dwasp debugger, called \dwaspgui, and a plugin connector for the Integrated Development Environment (IDE) \aspide.
An important reason to choose \aspide over other ASP IDEs, like Sealion \cite{DBLP:journals/tplp/BusoniuOPST13}, is its richer support for test-driven development of ASP programs~\cite{DBLP:conf/inap/FebbraroLRR11}.
The test-driven development process~\cite{DBLP:conf/xpu/FraserBCMNP03} requires the repetition of a very short development cycle in which requirements are encoded as specific test cases (assessing a possibly small unit of the program), and the program is assessed against tests, and possibly fixed or improved. The cycle is then repeated to push forward the functionality, until the program satisfies all the requirements. The rapid identification of the cause of a failing test case is fundamental for test-driven development platforms.
In \aspide a user can naturally define test cases comprising inputs and expected (non-)outputs of a solver when applied to compute answer sets of a developed ASP program. 
Services of the IDE allow a user to execute the test cases and to generate a report of the results.
Thus, for a reported failed test case, the plug-in connector automatically generates and forwards to \dwasp all required inputs, configures and executes of debugging tasks relevant to the studied test case. 
In this way the \dwasp debugger synergistically works with the test-driven framework of \aspide resulting in a more complete test-driven development environment.
Overall, the combination provides an intuitive user-experience with \aspide similar to the one of modern IDEs for software development with imperative languages.

To summarize, the paper makes the following contributions:%
\footnote{
	This paper is an extended and improved verison of \cite{DBLP:conf/lpnmr/DodaroGMRS15} featuring the following improvements: $(i)$ we provide a new formal description of the debugging approach, and $(ii)$ we prove some of its formal properties; $(iii)$ we extend the approach with ``missing support'' faults; $(iv)$ we implement the \dwaspgui; $(v)$ we extend \aspide with the new debugger. $(vi)$ we perform a usability testing assessment on students of a course on ASP.
}
\begin{enumerate}
	\item We present an interactive and efficient debugging technique for non-ground ASP programs (see Section~\ref{sec:debugging-approach}).
	\item We implement a tool, called \dwasp, for debugging \emph{non-ground} ASP programs supporting all syntax of the latest ASP-Core~(see Section~\ref{sec:debugger}).
	\item We suggest a new graphical debugging interface for ASP programs, called \dwaspgui, based on \dwasp that improves the user-experience of the debugger~(see Section~\ref{sec:system-description:dwaspgui}).
	\item We integrate \dwaspgui over a new plug-in connector with \aspide (see Section~\ref{sec:system-description:aspide}).
	\item We compare our tool with state-of-the-art debuggers based on meta-programming approaches~\cite{DBLP:conf/aaai/GebserPST08,DBLP:journals/tplp/OetschPT10}, and show that our approach is basically unaffected by the combinatorial blow-up which limits the performance of meta-programming approaches (cfr.~\cite{DBLP:journals/tplp/OetschPT10}) (see Section~\ref{sec:experiments}).
	\item We report the results of an assessment of usability and appreciation of the debugger obtained by running a user experience experiment involving students of a course on ASP (see Section~\ref{sec:students}).
\end{enumerate}


\section{Answer Set Programming}
\label{sec:preliminaries}
In this section we overview ASP focusing on preliminary notions that are required for describing the debugging approach implemented in \dwasp.
The reader is referred to~\cite{DBLP:books/daglib/0040913,DBLP:series/synthesis/2012Gebser} for a more comprehensive presentation of ASP.

\myParagraph{Syntax.} 
A program $\pr$ is a finite set of rules of the form
\begin{equation}
	a_1 \lor \ldots \lor a_m \imp l_1, \ldots, l_n
	\label{eq:rule}
\end{equation}
where $a_1, \ldots, a_m$ are atoms and $l_1, \ldots, l_n$ are literals for $m, n \geq 0$.
In particular, an \emph{atom} is an expression of the form $p(t_1, \ldots, t_k)$, where $p$ is a predicate symbol and $t_1, \ldots, t_k$ are \emph{terms}. 
Terms are alphanumeric strings, and are distinguished in variables and constants. 
According to the Prolog's convention, only variables start with an uppercase letter.
A \emph{literal} is an atom $a_i$ (positive) or its negation $\dnot a_i$ (negative), where $\dnot$ denotes the \emph{negation as failure}.
An atom, literal, or rule is called \emph{ground}, if it contains no variable.
The complement of a literal $l$ is denoted by $\overline{l}$.
In particular, given atom $a$ it holds that $\overline{a} = \dnot a$ and $\overline{\dnot a} = a$. 
Moreover, the complement for a set of literals $L$ is $\overline{L} := \{ \overline{l} \mid l \in L \}$.
%
Given a rule $r$ of the form \eqref{eq:rule}, the set of atoms $H(r) = \{ a_1, \ldots, a_m \}$ is called \emph{head} and the set of literals $B(r) = \{ l_1, \ldots, l_n \}$ is called \emph{body}. 
Moreover, $B(r)$ can be partitioned into the sets $B^+(r)$ and $B^-(r)$ comprising the positive and negative body literals, respectively.
A rule $r$ is called \emph{fact} if $|H(r)| = 1$ and $B(r) = \emptyset$ and \emph{constraint} if $H(r) = \emptyset$.
Every rule $r \in \pr$ must be \emph{safe}, i.e. each variable of $r$ must occur in at least one positive literal of $B^+(r)$.
In the following, we will also use \textit{choice rules} of the form $\{a\}$, where $a$ is a ground atom.
A choice rule $\{a\}$ is hereafter considered as a syntactic shortcut for the rule $a \vee a_F \imp$, where $a_F$ is a fresh new atom not appearing elsewhere in the program.

\myParagraph{Semantics.}
Let $\pr$ be an ASP program, the Herbrand Universe $U_\pr$, and the Herbrand base $B_\pr$ are defined as usual.
The semantics of an ASP program is given in terms of the answer sets of its ground instantiation.
The ground instantiation of $\pr$, denoted by $\gpr$, is the ground program obtained by properly substituting all variables occurring in rules from $\pr$ with elements of $U_\pr$.
An \emph{interpretation} is a set of ground atoms $I \subseteq B_\pr$.
Relation $\models$ is inductively defined as follows:
for $a \in B_\pr$, $I \models a$ if $a \in I$, otherwise $I \not\models a$; 
$I \models \dnot a$ if $I \not\models a$;
for a set of atoms $S$, $I \models S$ if $I \models l$ for all $l \in S$, otherwise $I \not\models S$;
for a rule $r \in \gpr$, $I \models r$ if $I \cap H(r) \neq \emptyset$ whenever $I \models B(r)$;
for a program $\gpr$, $I \models \gpr$ if $I \models r$ for all $r \in \gpr$.
$I$ is a \emph{model} of $\gpr$ if $I \models \gpr$.

The \emph{reduct} $\gpr_I$ of a program $\gpr$ with respect to an interpretation $I$ is obtained from $\gpr$ as follows:
(i) any rule $r$ such that $I \not\models B^-(r)$ is removed;
(ii) any negated literal $l$ such that $I \not\models l$ is removed from the body of the remaining rules.
An interpretation $I$ is an \textit{answer set} (stable model) of a program $\gpr$ if $I \models \gpr$, and there is no $J \subset I$ such that $J \models \gpr_I$.
The set of all answer sets of $\pr$ is denoted by $AS(\pr)$. $\pr$ is \emph{incoherent}, if $AS(\pr) = \emptyset$, and \emph{coherent} otherwise.

%
%
%

\myParagraph{Support.}
Given a model $I$ for a ground program $\pr$, 
we say that a ground atom $a \in I$ is {\em supported}
with respect to $I$ if there exists a \emph{supporting} rule $r\in \gpr$
such that $I \models B(r)$, $I\models a$ and $I \not \models (H(r) \setminus \{a\}$).
As it follows from the definition of the semantics given above, all atoms in an answer set $I$ must be supported.

\section{Debugging Approach}
\label{sec:debugging-approach}
In the following we present our approach to interactive localization of faults in non-ground ASP programs.
In general, one can differentiate between syntactic and semantic faults which require specific methods for debugging them. 
The first type of faults is usually detected by parsers of ASP grounders, whereas semantic faults can only be observed by a user while analyzing answer sets returned by a solver. 
In order to detect faults of the second type many ASP users verify the correctness of a program by testing it on a sample instance, which is common for software development.
In this case a user compares a (sub)set of all answer sets returned by a solver with expected solutions determined by hand. 
Therefore, often at least one answer set of the program for the sample instance is known to the user, otherwise it is impossible to understand that some program is buggy.
That is, a bug is then revealed when the known answer set is not among the computed ones.

\begin{definition}[Buggy Program]\label{def:buggyencoding}
	Let $\pr^c$ be the intended (correct) program that a user is going to formulate and $AS(\pr^c)$ be a set of its known answer sets.
	Then, a program $\pr$ is said to be \textit{buggy} with respect to a program $\pr^c$ if there exists an answer set $A \in AS(\pr^c)$ such that $A \not\in AS(\pr)$.
\end{definition}
Note that by this definition our approach deals only with situations in which some answer set of the correct program is missing. The opposite problem -- there is an answer set $A \in AS(\pr)$ such that $A \not\in AS(\pr^c)$ -- is not the focus of this paper. 

\begin{example}[Buggy Program]
	\label{ex:test-case}
	Consider the program $\pr'$ representing a (buggy) encoding for the graph coloring problem:
	\begin{equation*} 
		\begin{array}{l}	
			node(X) \imp edge(X,Y) \\
			node(X) \imp edge(Y,X) \\
			col(X, blue) \vee col(X, red) \vee col(X, green) \imp node(X) \\
			\imp col(X,C_1), col(Y,C_2),  edge(X,Y), X \neq Y, C_1 \neq C_2 \vspace{5pt}
		\end{array}
	\end{equation*}
	During the development of the encoding the user might create a simple graph, e.g. considering the sample instance comprising two facts:
	\begin{equation*}
		edge(1,2)\imp \quad \quad edge(2,3) \imp
	\end{equation*}
	For this instance the user expects the assignment of the $blue$ color to the nodes 1 and 3 as well as of the $red$ color to the node 2 to be among the solutions.
	However, the corresponding answer set encoding this solution is missing due to a bug in the encoding. In particular, note that the condition $C_1 \neq C_2$ should be replaced by $C_1 = C_2$.\hfill$\lhd$
\end{example}

The situation in which some solution is missing can be detected by means of testing, which is a common approach in software engineering aiming at identification and localization of faults in programs. 

\begin{definition}[Test Case]
	\label{def:test-case}
	Let $\pr^c$ be the intended program, $\pr$ be a program and $B_\pr$ be a Herbrand base of $\pr$. A set of atoms $T \subseteq B_\pr$ is a \emph{test case} for a program $\pr$ iff there exists an answer set $A \in AS(\pr^c)$ such that $T \subseteq A$.
\end{definition}

\begin{definition}[Test Case Failure]
	\label{def:test-case-failure}
	Given a program $\pr$ and a test case $T$, let $\pr_T = \{\imp \overline{l} \mid l \in T \}$, we say that $T$ \emph{fails} if $\pr \cup \pr_T$ is incoherent.
\end{definition}

Assertions of a test case are modeled by constraints that force the asserted atoms to be in all answer sets.
As a result, checking whether a test case $T$ of a program $\pr$ passes or not is reduced to checking whether $\pr \cup \pr_T$ is coherent, as illustrated in Example~\ref{ex:test-case-ctd}.

\begin{example}[Failing Test Case]
	\label{ex:test-case-ctd}
	Consider the program $\pr'$ from Example~\ref{ex:test-case} and the test case 
	\begin{equation*}
		T = \{col(1,blue), col(2,red), col(3,blue)\}.
	\end{equation*}
	The program $\pr'_{T}$ is composed by the constraints $\imp \dnot col(1,blue)$, $\imp \dnot col(2,red)$, and $\imp \dnot col(3,blue)$.
	Thus, $T$ is failing since $\pr' \cup \pr'_{T}$ is incoherent. \hfill$\lhd$
\end{example}

Whenever a test case fails, i.e.\ the given program $\pr$ is buggy, the goal of a debugger is to find an explanation for this observation.
However, in many cases it might be obvious to a user that some rules in $\pr$ are definitely correct and are not related to a fault, e.g., facts defining the test instance or some simple rules.
In such situations, the user might want to communicate this background knowledge to the debugger in order to exclude explanation candidates that are not explanations of the fault.
In practice, this allows the debugger to stay focused on the fault and reduce its runtime.

\begin{definition}[Background Knowledge]
	\label{def:back-know}
	Given a program $\pr$ the \emph{background knowledge} $\bg \subset \pr$ is a set of rules considered to be correct.
\end{definition}

\begin{example}[Background Knowledge]
	\label{ex:back-know}
	Consider the program $\pr'$ from Example~\ref{ex:test-case} and assume that the user is sure that the sample instance is encoded correctly.
	Therefore, the background knowledge $\bg'$ of $\pr'$ is composed by the facts $edge(1,2) \imp$ and $edge(2,3) \imp$. \hfill$\lhd$
\end{example}

Note that while working on various industrial applications and developing this debugging approach, we found that it is advantageous to move facts of test instances to the background knowledge.
This behavior is implemented as the default one in our debugger, unless the user provides a custom definition of the background knowledge.

In general, there are two possible causes for the incoherence of  $\pr \ \cup \ \pr_T$ considered in the literature on ASP debugging: (1) over-constrained programs and (2) missing support. 
In the first case we would like to find and highlight a set of rules in $\pr \setminus \bg$ that erroneously constrain the set of all answer sets and eliminate the intended ones. 
%
If the problem is due to the missing support, i.e.\ none of the rules in $\pr\setminus \bg$ allow for derivation of some atoms in the intended answer set, then we would like to highlight the corresponding atoms in the test case.

\begin{example}[Errors Detection]
	Consider the program $\pr'$ and the test case $T$ from Example~\ref{ex:test-case-ctd}.
	A debugger should identify the buggy rule:
	\begin{align*}
		\imp col(X,C_1), col(Y,C_2),  edge(X,Y), X \neq Y, C_1 \neq C_2 \enspace.
	\end{align*}
	Indeed, given the constraints $\imp \dnot col(1,blue)$ and $\imp \dnot col(2,red)$ in $\pr'_T$, the above constraint cannot be satisfied.
	In that case, the condition $C_1 \neq C_2$ should be replaced by $C_1 = C_2$. \hfill$\lhd$
\end{example}

In our approach, fault identification is done by constructing a specific program that allows a solver to find rules and/or atoms explaining the fault.
Note that this program is different from the one generated by meta-programming debuggers, since it uses no reification (see Sections \ref{sec:experiments} and \ref{sec:related} for more details).
Instead, we only extend the buggy program in a way that allows the debugger to map its results back to the input program.

\begin{definition}[Debugging Program]
	\label{def:debugging-program}
	Let $\pr$ be a program, $\bg$ be the background knowledge, and $id : (\pr \setminus \bg) \to \mathbb{N}$ be an assignment of unique identifiers to the non-background knowledge rules of $\pr$.
	Then, given 
	\begin{enumerate}
		\item the program
		$
		\dpr[D]{\pr} = \{H(r) \imp B(r) \cup \{\debug{id(r)}{\vec{\mathit{vars}}} \} \mid r \in (\pr \setminus \bg)\}
		$,
		where  $\debug{id(r)}{\vec{\mathit{vars}}}$ is a fresh atom and $\vec{\mathit{vars}}$ is a tuple with all variables of $r$, and 
		\item the program
		$
		\dpr{\pr}^S = \{ a \imp \dnot \support{a} \mid a \in B_{\pr} \}
		$, 
		where $\support{a}$ is a fresh atom called \emph{supporting atom} of $a$;
	\end{enumerate}
	the \emph{debugging program} $\dpr{\pr}$ of $\pr$ is defined as  $\dpr{\pr} = \dpr[D]{\pr} \cup \dpr{\pr}^S \cup \bg$.
\end{definition}

\begin{example}[Debugging Program]
	\label{ex:debugging-program}
	Consider the program $\pr'$ and the background knowledge $\bg'$ from Example~\ref{ex:back-know}.
	The \emph{debugging program} $\dpr{\pr'}$ is the following set of rules:
	\begin{equation*}
		\begin{array}{lll}
			\dpr[D]{\pr}: & \multicolumn{2}{l}{node(X) \imp edge(X,Y), \debug{1}{X,Y}} \\
			~& \multicolumn{2}{l}{node(X) \imp edge(Y,X), \debug{2}{Y,X}}\\
			~ &\multicolumn{2}{l}{col(X, blue) \vee col(X, red) \vee col(X, green) \imp node(X), \debug{3}{X}} \\
			~ &\multicolumn{2}{l}{\imp col(X,C_1), \ col(Y,C_2), \ edge(X,Y),}\\ 
			~ &\multicolumn{2}{c}{ X \neq Y, \ C_1 \neq C_2, \ \debug{4}{X,Y,C_1,C_2}}\\
			&&\\
			\dpr{\pr}^S: & node(i) \imp \dnot\support{node(i)} &\forall\ i \in \{1,2,3\}\\
			~ & edge(i,j) \imp \dnot\support{edge(i,j)} &\forall\ (i,j) \in \{(1,2), (2,3)\}\\
			~ & col(n,c) \imp \dnot\support{col(n,c)} & \forall\ n \in \{1,2,3\}, \forall\ c \in \{blue,red,green\}\\
			&&\\
			\bg: & edge(1,2) \imp & edge(2,3) \imp\hfill\lhd 
		\end{array}
	\end{equation*}
\end{example}

Since atoms of the form $\debug{id(r)}{\vec{\mathit{vars}}}$ and $\support{a}$ only appear in the body of the rules of $\dpr{\pr}$, they are not supported. Therefore, $\dpr{\pr}$ is extended to provide a supporting rule for all atoms of this form.

\begin{definition}[Extended Debugging Program]\label{def:ext-debugging-program}
	Let $\dpr{\pr}$ be a debugging program, $\mathcal{A}^D=\{\debug{id(r)}{\vec{\mathit{vars}}}  \ \mid \debug{id(r)}{\vec{\mathit{vars}}} \in B_{\dpr{\pr}}\}$ and $\mathcal{A}^S=\{\support{a} \ \mid \support{a} \in B_{\dpr{\pr}}\}$.
	Then, an \emph{extended debugging program} $\dpr[*]{\pr}$ is defined as $\dpr{\pr} \cup \{ \{a\} \imp \; \mid a \in (\mathcal{A}^D \cup \mathcal{A}^S)\}$, where $\{a\} \imp$ denotes a choice rule  \cite{DBLP:journals/ai/SimonsNS02}.
\end{definition}

\begin{example}[Extended Debugging Program]
	\label{ex:ext-debugging-program}
	Consider $\dpr{\pr'}$ from Example~\ref{ex:debugging-program}.
	Given $\dpr{\pr'}$ an intelligent grounder would output a program that comprises only the ground rules derived from the background knowledge, namely $edge(1,2) \imp$ and $edge(2,3) \imp$. All other rules will be dropped because of the atoms over $\_debug$ and $\_support$ predicates.
	
	The extended debugging program $\dpr[*]{\pr'}$ comprises the following additional set of rules:
	\begin{equation*}
		\begin{array}{ll}
			\{\debug{1}{i,j}\}  \imp & \forall\ (i,j) \in \{(1,2), (1,3)\}\\
			\{\debug{2}{i,j}\} \imp &\forall\ (i,j) \in \{(1,2), (1,3)\}\\
			\{\debug{3}{i}\} \imp& \forall\ i \in \{1,2,3\}\\
			\{\debug{4}{1,2,c_1,c_2}\} \imp &\forall\ c_1, c_2 \in \{blue,red,green\} \mid c_1 \neq c_2 \\
			\{\support{node(i)}\} \imp&\forall\ i \in \{1,2,3\}\\
			\{\support{edge(i,j)}\} \imp&\forall\ (i,j) \in \{(1,2), (1,3)\}\\
			\{\support{col(n,c)}\} \imp& \forall\ n \in \{1,2,3\}, \forall\ c \in \{blue,red,green\} \qquad \hfill\lhd\\
		\end{array}
	\end{equation*}
	These rules provide the necessary support to the fresh body atoms of the rules in  $\dpr[D]{\pr}$ and $\dpr{\pr}^S$ thus disabling the simplifications of a grounder.
\end{example}

It is important to show that the extended debugging program preserves some properties of the original program. In particular, in the following it is shown that, under some conditions, the extended debugging program is coherent if and only if the original program is coherent.

\begin{proposition}
	Let $\pr$ be a program, $\bg$ a background knowledge and $T$ a test case. 
	In addition, let $\pr_\mathcal{A} = \{ a \imp \mid a \in (\mathcal{A}^D \cup \mathcal{A}^S)\}$.
	Then, a program $\Gamma_\pr = \dpr[*]{\pr} \cup \pr_T \cup \pr_\mathcal{A}$ is coherent iff $\pr \cup \pr_T$ is coherent.
\end{proposition}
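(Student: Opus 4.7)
The plan is to show that in $\Gamma_\pr$ every answer set must contain all atoms of $\mathcal{A}^D \cup \mathcal{A}^S$, and once those atoms are fixed to be true, the remaining rules of $\dpr[*]{\pr}$ collapse to the rules of $\pr$, so that the answer sets of $\Gamma_\pr$ are in one-to-one correspondence with those of $\pr \cup \pr_T$ via the map $A \mapsto A \cup \mathcal{A}^D \cup \mathcal{A}^S$.

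Concretely, I would proceed in the following steps. First, observe that $\pr_\mathcal{A}$ is a set of facts, hence $\mathcal{A}^D \cup \mathcal{A}^S \subseteq M$ for every model $M$ of $\Gamma_\pr$; consequently the choice rules $\{a\}\imp$ in $\dpr[*]{\pr} \setminus \dpr{\pr}$ are redundantly satisfied and contribute nothing to the reduct beyond what the facts already give. Second, for every atom $a \in B_\pr$, the rule $a \imp \dnot \support{a}$ of $\dpr{\pr}^S$ is removed in the reduct $(\Gamma_\pr)_M$ because $\support{a} \in M$; so $\dpr{\pr}^S$ contributes nothing to the reduct. Third, for every rule $r \in \pr \setminus \bg$, its $\dpr[D]{\pr}$ counterpart has body $B(r) \cup \{\debug{id(r)}{\vec{\mathit{vars}}}\}$; since $\debug{id(r)}{\vec{\mathit{vars}}} \in M$ for each relevant grounding of $r$ (because of $\pr_\mathcal{A}$), the satisfaction of the augmented body coincides with satisfaction of $B(r)$, and after reduct computation with respect to $M$ the rule behaves exactly like the corresponding ground instance of $r$.

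With these three observations, the reduct $(\Gamma_\pr)_M$ decomposes, up to the facts in $\pr_\mathcal{A}$, into the reduct of $\pr \cup \pr_T \cup \bg = \pr \cup \pr_T$ with respect to $M \cap B_\pr$. For the forward direction, if $A$ is an answer set of $\pr \cup \pr_T$, I set $M := A \cup \mathcal{A}^D \cup \mathcal{A}^S$, verify $M \models \Gamma_\pr$ using the observations above, and show minimality of $M$ over $(\Gamma_\pr)_M$: any $M' \subsetneq M$ must still contain $\mathcal{A}^D \cup \mathcal{A}^S$ (forced by facts), so $M' \cap B_\pr \subsetneq A$ would contradict minimality of $A$ over $(\pr \cup \pr_T)_A$. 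The backward direction is symmetric: given an answer set $M$ of $\Gamma_\pr$, set $A := M \cap B_\pr$ and verify that $A$ satisfies $\pr \cup \pr_T$ and is minimal over its reduct, again reducing to the decomposition of $(\Gamma_\pr)_M$.

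The main obstacle I anticipate is the minimality check, because $\pr$ is disjunctive and answer-set minimality is subtle under the $\models$-based definition given in the preliminaries. Care is needed to argue that "shrinking" $M$ to a proper subset cannot drop any atom of $\mathcal{A}^D \cup \mathcal{A}^S$ (handled by the facts in $\pr_\mathcal{A}$), so any counterexample to minimality of $M$ would project to a counterexample to minimality of $A$ in $(\pr \cup \pr_T)_A$, and conversely. Once this bookkeeping is carried out cleanly, both directions fall out of the reduct decomposition described above.
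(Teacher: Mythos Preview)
Your proposal is correct and follows essentially the same line as the paper's own proof sketch: the facts in $\pr_\mathcal{A}$ force all atoms of $\mathcal{A}^D \cup \mathcal{A}^S$ to be true, which (i) makes the choice rules redundant, (ii) kills the bodies of all rules in $\dpr{\pr}^S$, and (iii) reduces each augmented rule of $\dpr[D]{\pr}$ to the corresponding rule of $\pr$. The paper leaves the argument at this ``$\Gamma_\pr$ collapses to $\pr \cup \pr_T$'' level, whereas you additionally spell out the bijection $A \mapsto A \cup \mathcal{A}^D \cup \mathcal{A}^S$ and the minimality bookkeeping for the disjunctive case; that extra care is welcome but does not constitute a different approach.
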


\begin{proof}[Proof sketch]
	The proof follows from the observation that the set of facts $\pr_\mathcal{A}$ in the program $\Gamma_\pr$ reduces $\dpr[*]{\pr}$ to $\pr$. 
	Namely, the set of rules $\{ \{a\} \imp \; \mid a \in (\mathcal{A}^D \cup \mathcal{A}^S)\}$ is trivially satisfied given  $\pr_\mathcal{A}$ and can be removed from consideration. Moreover, all atoms over $\_debug$ predicate must be valuated to true because of $\pr_\mathcal{A}$ and can be removed from bodies of corresponding rules.
	Finally, none of the bodies of rules in $\dpr[S]{\pr}$ are satisfied given $\pr_\mathcal{A}$ and, therefore, these rules are also removed.
\end{proof}

Consequently, checking the correctness of a test case $T$ of a program $\pr$ and background knowledge $\bg$ can be done by verifying if $\Gamma_\pr$ is coherent. In case $\Gamma_\pr$ is incoherent, and so is the $\pr \cup \pr_T$, we can use $\Gamma_\pr$ to find the reason of the incoherence.

\begin{definition}[Reason of incoherence]
	Let $\Gamma_\pr$ be an incoherent program. A set of rules $\mathcal{R} \subseteq \pr_\mathcal{A}$ is a reason of incoherence for $\Gamma_\pr$ if $(\Gamma_\pr \setminus \pr_\mathcal{A}) \cup \mathcal{R}$ is incoherent.
	A reason of incoherence $\mathcal{R}$ is \textit{minimal} if there is no set of rules $\mathcal{R}' \subset \mathcal{R}$ such that $\mathcal{R}'$ is reason of incoherence for $\Gamma_\pr$.
\end{definition}

\begin{example}[Reason of incoherence]
	Consider the extended debugging program $\dpr{\pr'}^*$ from Example~\ref{ex:ext-debugging-program} and the following test case:
	\begin{equation*}
		T = \{col(1,blue), col(2,red), col(3,blue)\}.	
	\end{equation*}
	Note that $\Gamma_{\pr'} \setminus \pr'_\mathcal{A}$ is coherent, whereas $\Gamma_\pr$ is incoherent. Thus, a (trivial) reason of incoherence would be the whole set $\pr'_\mathcal{A}$.
	There are two minimal reasons of incoherence, i.e. $\mathcal{R}_1 = \{\debug{4}{1,2,blue,red} \imp\}$ and $\mathcal{R}_2 = \{\debug{4}{2,3,blue,red} \imp\}$. Clearly, when atoms $col(1,blue)$ and $col(2,red)$ are true, the rule 
	\begin{equation*}
		\imp col(X,C_1), col(Y,C_2),  edge(X,Y), X \neq Y, C_1 \neq C_2	\end{equation*}
	with the instantiation $X=1$, $Y=2$, $C_1=blue$ and $C_2=red$ is violated, thus the atom $\debug{4}{1,2,blue,red}$ cannot be true.
	Moreover, note that both reasons originate from the same non-ground rule and are due to the symmetry of substitutions.$\hfill \lhd$
\end{example}
One important property of reasons of incoherence is their monotonicity, i.e. if a set of rules $\mathcal{R} \subseteq \pr_\mathcal{A}$ is a reason of incoherence, then all supersets $\mathcal{R}_1 \subseteq \pr_\mathcal{A}$ of $\mathcal{R}$ are also reasons of incoherence.

\begin{theorem}[Monotonicity]\label{thm:monotonicity}
	Let $\pr$ be a program, $T$ a test case, $\mathcal{B}$ a background knowledge, and $\dpr[*]{\pr}$ an extended debugging program over $\pr$ and $\mathcal{B}$.
	Let $\Gamma_\pr = \dpr[*]{\pr} \cup \pr_T \cup \pr_\mathcal{A}$ be an incoherent program and $\mathcal{R} \subseteq \pr_\mathcal{A}$ be a reason of incoherence, i.e.\ $(\Gamma_\pr \setminus \pr_\mathcal{A}) \cup \mathcal{R}$ is incoherent by definition.
	Then, any set of rules $\mathcal{R}_1$, such that $\mathcal{R} \subset \mathcal{R}_1 \subseteq \pr_\mathcal{A}$, is a reason of incoherence.
\end{theorem}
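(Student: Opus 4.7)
My plan is to prove the statement by contrapositive. I will assume there exists some $\mathcal{R}_1$ with $\mathcal{R} \subsetneq \mathcal{R}_1 \subseteq \pr_\mathcal{A}$ such that $(\Gamma_\pr \setminus \pr_\mathcal{A}) \cup \mathcal{R}_1$ is coherent, pick an arbitrary answer set $A$ of it, and derive that $A$ must also be an answer set of $(\Gamma_\pr \setminus \pr_\mathcal{A}) \cup \mathcal{R}$---contradicting the assumption that $\mathcal{R}$ is a reason of incoherence. Writing $P := (\Gamma_\pr \setminus \pr_\mathcal{A}) \cup \mathcal{R}$ and $P_1 := (\Gamma_\pr \setminus \pr_\mathcal{A}) \cup \mathcal{R}_1$, the model property $A \models P$ is immediate from $P \subseteq P_1$, so everything reduces to establishing the stability/minimality condition.

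To this end I would first observe that the rules in $\mathcal{R}_1 \setminus \mathcal{R}$ are all facts with empty negative body, so $P_1^A = P^A \cup (\mathcal{R}_1 \setminus \mathcal{R})$. I would then suppose, toward a contradiction, that some $A' \subsetneq A$ satisfies $P^A$, and argue that $A'$ automatically satisfies every fact in $\mathcal{R}_1 \setminus \mathcal{R}$; this would give $A' \models P_1^A$, contradicting the minimality of $A$ as a model of $P_1^A$ (which holds because $A \in AS(P_1)$).

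The heart of the argument is this fact-inheritance step, and I would handle it via the choice rules $\{a\} \imp$ that Definition~\ref{def:ext-debugging-program} adds to $\dpr[*]{\pr}$ for every $a \in \mathcal{A}^D \cup \mathcal{A}^S$. Each such rule is a syntactic shortcut for a disjunctive fact $a \vee a_F \imp$ where $a_F$ is fresh and occurs nowhere else. Since these rules contain no negation, they are unchanged by the reduct and belong to both $P^A$ and $P_1^A$. For any $a$ with $a \imp \in \mathcal{R}_1 \setminus \mathcal{R}$, the minimality of $A$ forces $a_F \notin A$: the fact $a \imp$ already satisfies the disjunction while $a_F$ supports no other atom, so $A \setminus \{a_F\}$ would otherwise be a strictly smaller model of $P_1^A$. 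Since $A' \subseteq A$, we obtain $a_F \notin A'$, and thus the disjunctive rule forces $a \in A'$, as required.

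The step I expect to be the main obstacle is justifying $a_F \notin A$: one must ensure that no other rule of $\dpr[*]{\pr}$ contributes an alternative support for $a_F$, and similarly that no rule outside the choice rule derives $a$ in a way that would disturb the minimality argument. Both checks rely on the freshness of $a_F$ and on the structural observation that the atoms in $\mathcal{A}^D$ and $\mathcal{A}^S$ appear only in positions prescribed by Definitions~\ref{def:debugging-program} and~\ref{def:ext-debugging-program}---namely positively in $\dpr[D]{\pr}$-bodies, negatively in $\dpr{\pr}^S$-bodies, and in their own choice rules. Once these syntactic invariants are laid out, the inheritance step, and hence the theorem, follows cleanly.
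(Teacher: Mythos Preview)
Your proposal is correct and follows essentially the same route as the paper: assume $(\Gamma_\pr \setminus \pr_\mathcal{A}) \cup \mathcal{R}_1$ has an answer set and show it is also an answer set of $(\Gamma_\pr \setminus \pr_\mathcal{A}) \cup \mathcal{R}$, exploiting that the choice rules $\{a\}\imp$ for the extra facts are already present in $\dpr[*]{\pr}$. The only difference is presentational: the paper packages the key step as the high-level inclusion chain $AS(\mathcal{P}\cup\mathcal{R}) = AS(\mathcal{P}\cup\mathcal{R}\cup\{\{a\}\imp\}) \supseteq AS(\mathcal{P}\cup\mathcal{R}\cup\{a\imp\}) = AS(\mathcal{P}\cup\mathcal{R}_1)$, whereas you unfold exactly this inclusion at the reduct level via the $a_F$-freshness argument.
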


\begin{proof}
	Let $\mathcal{P}=\Gamma_\pr \setminus \pr_\mathcal{A}$.
	Suppose that $\mathcal{P} \cup \mathcal{R}_1$ is coherent and $M_1$ is an answer set.
	We will prove that $M_1$ is an answer set of $\mathcal{P} \cup \mathcal{R}$.
	Therefore, we have a contradiction.
	
	Let $\mathcal{R}_2 = \mathcal{R}_1 \setminus \mathcal{R}$ and
	let $\mathcal{A}_{\mathcal{R}_2} = \{a \mid a \imp \ \in \mathcal{R}_2\}$.
	For each atom $a \in \mathcal{A}_{\mathcal{R}_2}$ (i.e. of the form \debugempty or \supportempty) there is a choice rule of the form $\{a\} \imp \ \in \mathcal{P}$.
	Therefore, since $\{a\} \imp$ is the only rule containing $a$ in the head the following property holds:
	\begin{equation*}
		\begin{array}{c}
			AS(\mathcal{P} \cup \mathcal{R})\\
			=\\
			AS(\mathcal{P} \cup \mathcal{R} \cup \{\{a\} \imp \mid a \in \mathcal{A}_{\mathcal{R}_2}\})\\
			\supseteq\\
			AS(\mathcal{P} \cup \mathcal{R} \cup \{a \imp \mid a \in \mathcal{A}_{\mathcal{R}_2}\})\\
			=\\
			AS(\mathcal{P} \cup \mathcal{R} \cup \mathcal{R}_2)\\
			=\\
			AS(\mathcal{P} \cup \mathcal{R}_1)\\
		\end{array}
	\end{equation*}
	Therefore, $AS(\mathcal{P} \cup \mathcal{R}) \supseteq AS(\mathcal{P} \cup \mathcal{R}_1)$.
	Thus, if $M_1$ is an answer set of $\mathcal{P} \cup \mathcal{R}_1$ then $M_1$ is an answer set of $\mathcal{P} \cup \mathcal{R}$, which is impossible since by definition $\mathcal{P} \cup \mathcal{R}$ is incoherent. Consequently, $\mathcal{P} \cup \mathcal{R}_1$ is also incoherent.
\end{proof}

Note that there are multiple ways to prove Theorem~\ref{thm:monotonicity}. For instance, we can use the definition of reduct given in Section~\ref{sec:preliminaries}. 
The idea of the proof is based on the fact that for any test case $T$ the program $\pr_T$ defines a set of possible interpretations by constraining the truth assignments of atoms in $T$.
According to the definition of the reduct, for each interpretation $I$ there is only one reduct corresponding to it. 
Since bodies of all rules in the reduct are positive, i.e.\ comprise no negative literals, the consequence relation is monotonic. 
Therefore, we can always find at least one minimal reason of incoherence for every reduct and, consequently, for every interpretation allowed by the given test case.

Intuitively, a reason of incoherence represents a set of rules that makes the program incoherent. If the reason is minimal, removing one of those rules from the program makes it coherent.
Thus, debugging an incoherent program can be reduced to the process of finding a minimal reason of incoherence, fix it and then reiterate the process until all reasons have been analyzed.
However, in some cases a reason of incoherence might contain a large number of rules, thus making it infeasible to find the buggy rule among them.
Therefore, we aim at reducing the reasons of incoherence by querying the user on the atoms that must belong to the intended answer set.

\begin{example}[Buggy Encoding]
	\label{ex:test-case2}
	Consider the following program $\pr''$:
	\begin{equation*}
		\begin{array}{llll}
			a \imp c & \qquad
			b \imp \dnot c & \qquad
			c \imp \dnot b & \qquad
			\imp c, \dnot b
		\end{array}
	\end{equation*}
	and the test case $T=\{a\}$.
	The program $\dpr{\pr''}$ obtained from $\pr''$ is the following:
	\begin{equation*}
		\begin{array}{llll}
			a \imp c, \debugground{1} &
			b \imp \dnot c, \debugground{2}  &
			c \imp \dnot b, \debugground{3}  &
			\imp c, \dnot b, \debugground{4}\\
			a \imp \dnot \support{a} &
			b \imp \dnot \support{b} &
			c \imp \dnot \support{c}. & \\		
		\end{array}
	\end{equation*}
	In this case, $\mathcal{R} = \{ \debugground{4} \imp, \support{a} \imp, \support{b} \imp\}$ is a minimal reason of incoherence of the program $\Gamma_{\pr''}$. The intuitive meaning is that when the rule $\imp c, \dnot b$ is in the program the test case fails because $a$ and $b$ cannot be supported. Thus, the source of the error might be the rule $\imp c, \dnot b$ or one of the rules containing $a$ and $b$ in the head. 
	The idea is to query the user to reduce the possible source of errors.
	\hfill$\lhd$
\end{example}

\begin{definition}[Query]
	\label{def:query}
	Let $\Gamma_\pr$ be an incoherent program , let $T$ be a test case, and let $\mathcal{R}$ be a minimal reason of incoherence.
	A \emph{query} is an atom $q \in B_\pr \setminus T$.
	Let $\Gamma_\pr^* = (\Gamma_\pr \setminus  \pr_\mathcal{A}) \cup \mathcal{R}$, we define 
	\begin{equation*}
	\begin{array}{c}
	Q^+(q) = \bigcup_{r \in \mathcal{R}} \{ I \mid q \in I, I \in AS(\Gamma_\pr^* \setminus \{r\}) \} \\
	\mathrm{and}\\
	Q^-(q) = \bigcup_{r \in \mathcal{R}} \{ I \mid q \notin I, I \in AS(\Gamma_\pr^* \setminus \{r\}) \}.
	\end{array}
	\end{equation*}
\end{definition}
Note that if $\mathcal{R}$ is minimal, then $\Gamma_\pr^* \setminus \{r\}$ is coherent, for each $r \in \mathcal{R}$.
For a query atom $q$, the set $Q^+(q)$ contains all answer sets in which the query atom $q$ is true, whereas $Q^-(q)$ contains all answer sets in which $q$ is false.
Such sets are used to discriminate which atom is selected as query atom.
The user then should confirm whether the query atom $q$ is or not in the intended answer set.

\begin{example}[Query]
	\label{ex:query}
	Consider the program $\pr''$ from Example~\ref{ex:test-case2} and the minimal reason of incoherence $\mathcal{R} = \{ \debugground{4} \imp, \support{a} \imp, \support{b} \imp\}$. A query atom is one of $b$ and $c$.
	The program $\Gamma_\pr''^* \setminus \{\debugground{4} \imp\}$ admits the following answer sets:
	\begin{equation*}
		\begin{array}{l}
			I_1 = \{a, c, \support{a}, \support{b}, \debugground{1}, \debugground{3}\},\\
			I_2 = \{a, c, \support{a}, \support{b}, \debugground{1}, \debugground{2}, \debugground{3}\},\\
			I_3 = \{a,c, \support{a}, \support{b}, \support{c},\debugground{1}, \debugground{3}\},\\
			I_4 = \{a,c, \support{a}, \support{b}, \support{c}, \debugground{1}, \debugground{2}, \debugground{3}\},\\
			I_5 = \{a,c, \support{a}, \support{b}, \debugground{1}\},\\
			I_6 = \{a,c, \support{a}, \support{b}, \debugground{1}, \debugground{2}\}.
		\end{array}	
	\end{equation*}
	Next, the program $\Gamma_\pr''^* \setminus \{\support{a}\}$ admits the following answer sets: 
	\begin{equation*}
		\begin{array}{l}	
			I_7 = \{a, b, \support{b}, \support{c}, \debugground{2}, \debugground{4}\},\\
			I_8 = \{a, b, \support{b}, \support{c}, \debugground{2}, \debugground{3}, \debugground{4}\},\\
			I_9 = \{a, b, \support{b}, \support{c}, \debugground{1}, \debugground{2}, \debugground{4}\},\\
			I_{10} = \{a, b, \support{b}, \support{c}, \debugground{1}, \debugground{2}, \debugground{3}, \debugground{4}\},\\
			I_{11} = \{a, \support{b}, \support{c}, \debugground{4}\},\\
			I_{12} = \{a, \support{b}, \support{c}, \debugground{1}, \debugground{4}\}.
		\end{array}
	\end{equation*}
	Finally, the program $\Gamma_\pr''^* \setminus \{\support{b}\}$ admits the following answer sets: 
	\begin{equation*}
		\begin{array}{l}	
			I_{13} = \{a, b, c, \support{a}, \debugground{1}, \debugground{4}\},\\
			I_{14} = \{a, b, c, \support{a}, \debugground{1}, \debugground{2}, \debugground{4}\},\\
			I_{15} = \{a, b, c, \support{a}, \debugground{1}, \debugground{3}, \debugground{4}\},\\
			I_{16} = \{a, b, c, \support{a}, \debugground{1}, \debugground{2}, \debugground{3}, \debugground{4}\}. \phantom{\support{}}\\
		\end{array}
	\end{equation*}	
	Then, $Q^+(b) = \{I_7, \ldots, I_{10}, I_{13}, \ldots, I_{16}\}$ and $Q^-(b) = \{I_1, \ldots, I_6, I_{11}, I_{12}\}$, while 
	$Q^+(c) = \{I_1, \ldots, I_6, I_{13}, \ldots, I_{16}\}$ and $Q^-(c) = \{I_7, \ldots, I_{12}\}.$	$\hfill\lhd$
\end{example}
After computing the sets $Q^+(p)$ and $Q^-(p)$ for all atoms $p$, the idea is to select a query atom in a way that, regardless the answer to the query, the number of possible fixes is cut in half, i.e. the atom $q$ such that the absolute value of $|Q^+(q)| - |Q^-(q)|$ is minimum.
When the atom $q$ is selected, the user considers whether $q$ to be true in the expected answer set. If $q$ must be true then $\imp \dnot q$ is added to the extended debugging program, otherwise $\imp q$ is added.

\begin{example}[Query session]
Let us continue Example~\ref{def:query}. The atom $b$ is selected as query atom, since $|Q^+(b)| - |Q^-(b)| = 0$.
When the query $b$ is selected, the user considers whether $b$ to be true in the expected answer set. 
Assume the user selects $b$ to be true and $\imp \dnot b$ is added to $\dpr[*]{\pr''}$.
For the new version of the extended debugging program we compute the new minimal reason of incoherence $\mathcal{R} = \{\support{a} \imp, \support{b} \imp\}$. 
If the user answers the next only possible query $c$ with false, the  $\dpr[*]{\pr''}$ is extended with $\imp c$. 
Thus, the newly computed reason of incoherence comprises only one rule $\mathcal{R} = \{ \support{a} \imp\}$.
\end{example}

\section{A Debugger based on \dwasp}
\label{sec:gui}
In this section, a new graphical debugger based on \dwasp, called \dwaspgui, and its integration in \aspide~\cite{DBLP:conf/lpnmr/FebbraroRR11} are presented by running an example.
\subsection{The \dwasp Debugger}\label{sec:debugger}
Our implementation of the \dwasp debugger consists of two components: the debugging grounder \gringowrapper and \dwasp.
Figure \ref{fig:debugging-architecture} illustrates the interaction of both components to debug a program $\pr$.
First, the program $\pr$ is read by \gringowrapper from either the standard input or several input files.
The debugging grounder internally transforms the input program, passes the result to an ASP grounder   and outputs the ground debugging program to the standard output, which is then processed by \dwasp to start the interactive debugging session.
In general, \gringowrapper supports any \texttt{lparse}-compatible grounder, however, in our implementation we use \gringo~\cite{DBLP:conf/lpnmr/GebserKKS11}.

\begin{figure}[t!]
	\centering
\tikzstyle{block} = [draw, rectangle, minimum height=1.4em, minimum width=4.5em, node distance=3.5em,scale=0.7, transform shape]
\begin{tikzpicture}[->,>=stealth',auto,remember picture]
\node[block, dashed]                   (input)          {Input};
\node[block, right=of input]           (pre)            {Preprocessor};
\node[block, right=of pre]             (post)           {Postprocessor};
\node[block, dashed, right=of post]    (debug-file)     {Debug File};
\node[block,below=.69cm of debug-file] (wasp)           {\dwasp};
\path (pre) -- node[above=0.4cm]       (gringo-wrapper) {\gringowrapper} (post);
\path (pre) -- node[block,below=1cm]   (gringo)         {\gringo} (post);
\draw ($(pre.north west)+(-0.2,0.7)$) rectangle ($(post.south east)+(0.2,-0.2)$);
\draw (input)      -> (pre)        node[midway] {};
\draw (pre)        |- (gringo)     node[near end] {};
\draw (gringo)     -| (post)       node[near start] {};
\draw (post)       -> (debug-file) node[midway] {};
\draw (debug-file) -> (wasp)       node[midway] {\verb|--debug|};
\end{tikzpicture}
	\caption{Interaction of \gringowrapper and \dwasp in debugging mode.}
	\label{fig:debugging-architecture}
\end{figure}

\paragraph{Grounding with \gringowrapper.}
\label{sec:implementation:grounding}
The task of \gringowrapper is to obtain the grounded debugging program given an input program $\pr$ and some test case $T$.
First, $\pr$ is translated to the extended debugging program $\dpr[*]{\pr}$, as described in Definition \ref{def:ext-debugging-program} and $T$ is translated into $\pr_T$.
In case a user does not provide the background knowledge, by default, all facts of $\pr$ are assumed to be correct, i.e. the background knowledge $\bg$ comprises all facts of $\pr$.
After this transformation, \gringo is used to obtain the ground version of $\dpr[*]{\pr} \ \cup \pr_T$.
However, modern grounders perform several optimizations during grounding, such as deriving new facts from normal rules.
Although these optimizations potentially decrease the time required by the solver, they are counterproductive when debugging a logic program because wrong facts could be derived from faulty rules.
Moreover, a grounder might remove entire non-ground rules that are missing support.
In this case, \gringowrapper issues a warning message that highlights the rules that were removed by the grounder.

There are a number of ways to avoid the removal of atoms or simplification of rules done by grounders. One can use \texttt{-{}-keep-facts} option of \gringo (since version 4.5.4) or use the following workaround implemented in \gringowrapper:
First, the wrapper performs a call to the grounder and analyzes the produced \emph{atoms table} of the \texttt{lparse} format, i.e. a list of ground atoms occurring in the ground program. 
Then, for each atom $p$ the choice rule $\{p\}$ is added to the original program and the grounder is called again.
These additional rules are removed in a postprocessing step.

\paragraph{Debugging session with \dwasp.}
\dwasp is a specialized variant of \wasp~\cite{DBLP:conf/lpnmr/AlvianoDLR15}, a state-of-the-art ASP solver.
\wasp is based on a CDCL-like backtracking algorithm~\cite{DBLP:journals/tc/Marques-SilvaS99}, featuring the so-called \textit{incremental interface}~\cite{DBLP:conf/lpnmr/AlvianoDLR15}.
In particular, \wasp can take as input a ground ASP program $\Pi$ and a set of atoms $A$, called \textit{assumptions}, and computes either an answer set $I \supseteq A$ (if $\Pi$ is coherent) or a reason of incoherence $\mathcal{R} \subseteq A$ (if $\Pi$ is incoherent).
During a debugging session the solver is first invoked by providing as input
the program produced by the \gringowrapper, i.e. $\Pi = (\dpr{\pr}^* \ \cup \ \pr_T)^G$, and $A = \mathcal{A^D} \cup \mathcal{A^S}$.
In case an answer set is found the execution terminates and a message outlining the condition is provided to the user.
Otherwise, a reason of incoherence $\mathcal{R}$ is returned by \wasp.
Note that, as argued in \cite{DBLP:journals/tplp/AlvianoD16}, reasons of incoherence computed by \wasp are not minimal in general.
Therefore, \dwasp computes a minimal reason of incoherence $\mathcal{R}^*$ by using the state-of-the-art algorithm \textsc{quickxplain}~\cite{DBLP:conf/aaai/Junker04}.
$\mathcal{R}^*$ is then provided to the user.
Subsequently, \dwasp computes one or more query atoms according to the definitions of previous section.
Actually, in order to reduce the number of queries provided to the user, \dwasp implements a heuristic for the computation of query atoms. 
Given a minimal reason of incoherence $\mathcal{R}^*$, $\mid \mathcal{R}^* \mid$ calls to the solver are performed. In particular, for each element $p \in \mathcal{R}^*$ a call is performed where $\Pi = \dpr{\pr}^* \cup \pr_T$ and $A = \mathcal{R}^* \setminus \{p\}$. 
A heuristically-limited number of answer sets for each call is then used to compute an estimation of the sets $Q^+$ and $Q^-$ (note that the heuristic is needed to avoid to compute all possible answer sets; \citeN{DBLP:conf/lpnmr/DodaroGMRS15}). Then the queries are computed, and user may provide answers according to the expected solution. 
The answers are added in the solver input as described in previous section, and the process is repeated (computing a smaller reason of incoherence) until the bug is identified (or the user stops the debugger).

\subsection{The \dwaspgui}
\label{sec:system-description:dwaspgui}

\begin{figure}[t!]
	\centering
	\includegraphics[width=.8\textwidth]{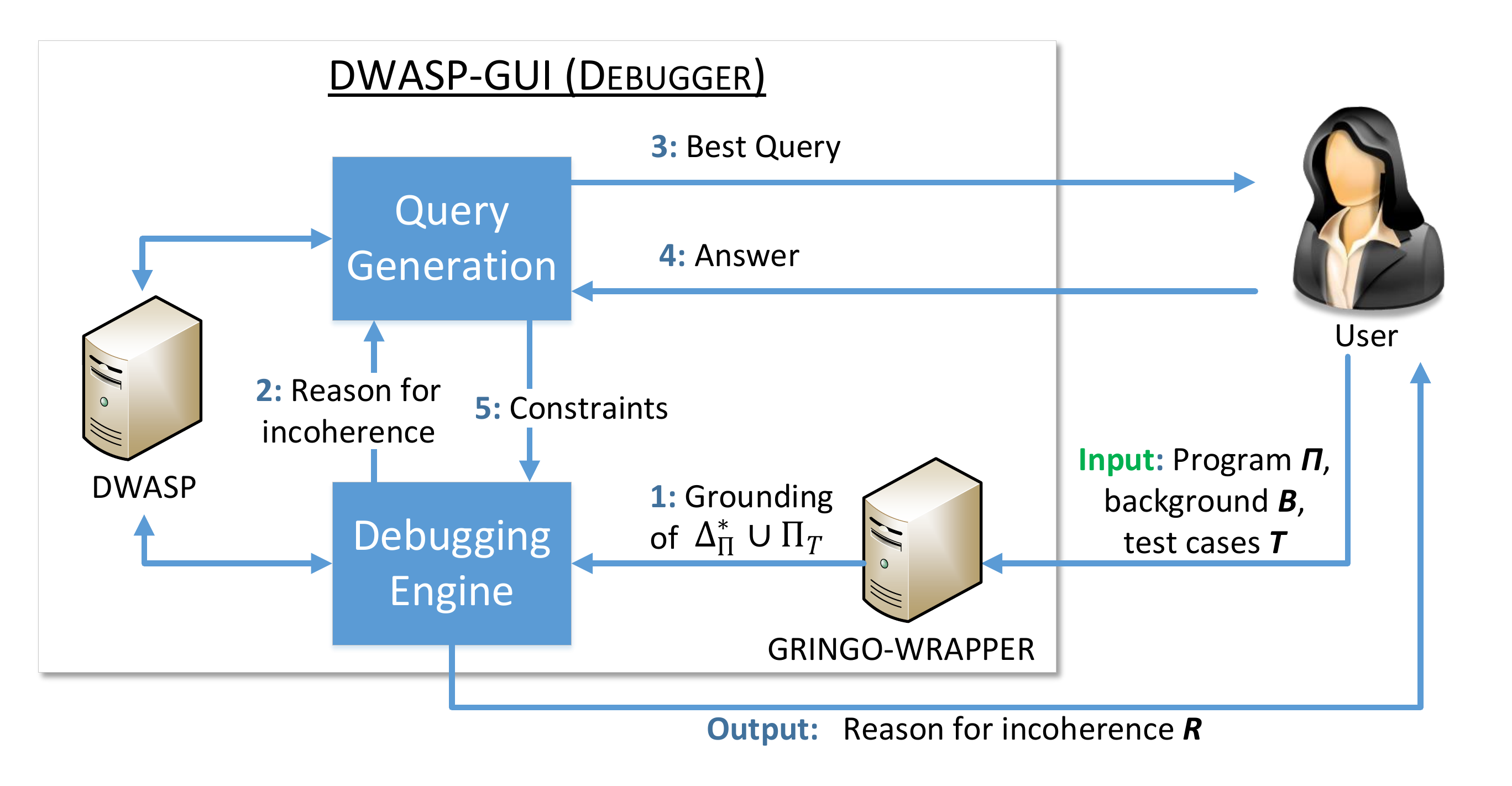}
	\caption{Interaction of the user with the debugging system: The front-end \dwaspgui uses \gringowrapper and \dwasp to debug the program.}
	\label{fig:architecture}
\end{figure}

\begin{figure}[t!]
	\centering
	\subfigure[Main window of the \dwaspgui.]{\label{fig:3col:main}
		\includegraphics[height=5.7cm, width=8.7cm]{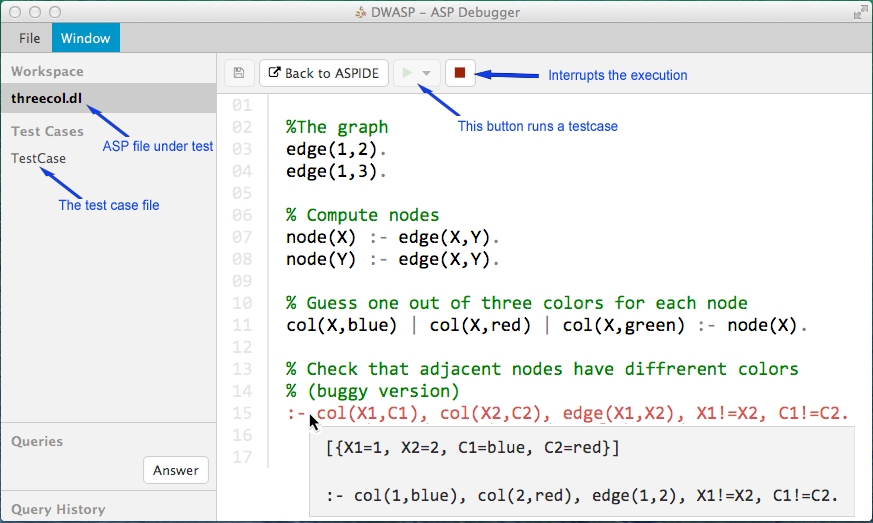}
	}
	\subfigure[A test case in \dwaspgui.]{\label{fig:3col:test}
		\includegraphics[height=5.7cm, width=3.5cm]{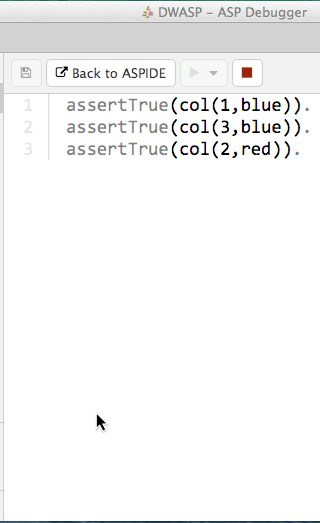}
	}	
	\subfigure[Unit testing and debugging (interaction with \aspide).]{\label{fig:3col:aspide}
		\includegraphics[width=0.99\textwidth]{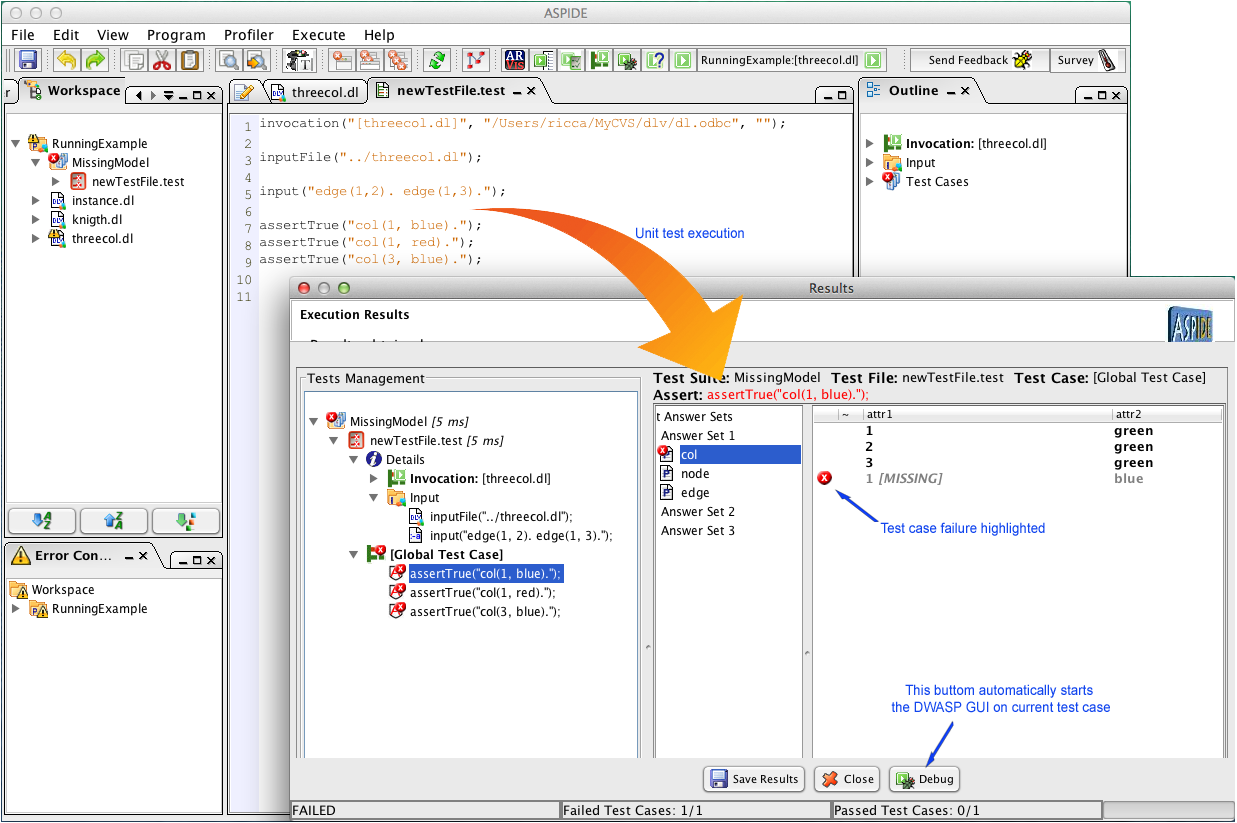}
	}
	\caption{Debugging and Testing the 3-Colorability encoding (Example~\ref{ex:test-case}).}
	\label{fig:dwasp-gui}
\end{figure}

\begin{figure}[t!]
	\centering
	\subfigure[Knight Tour (queries)]{\label{fig:knigth:query}
		\includegraphics[width=0.75\textwidth]{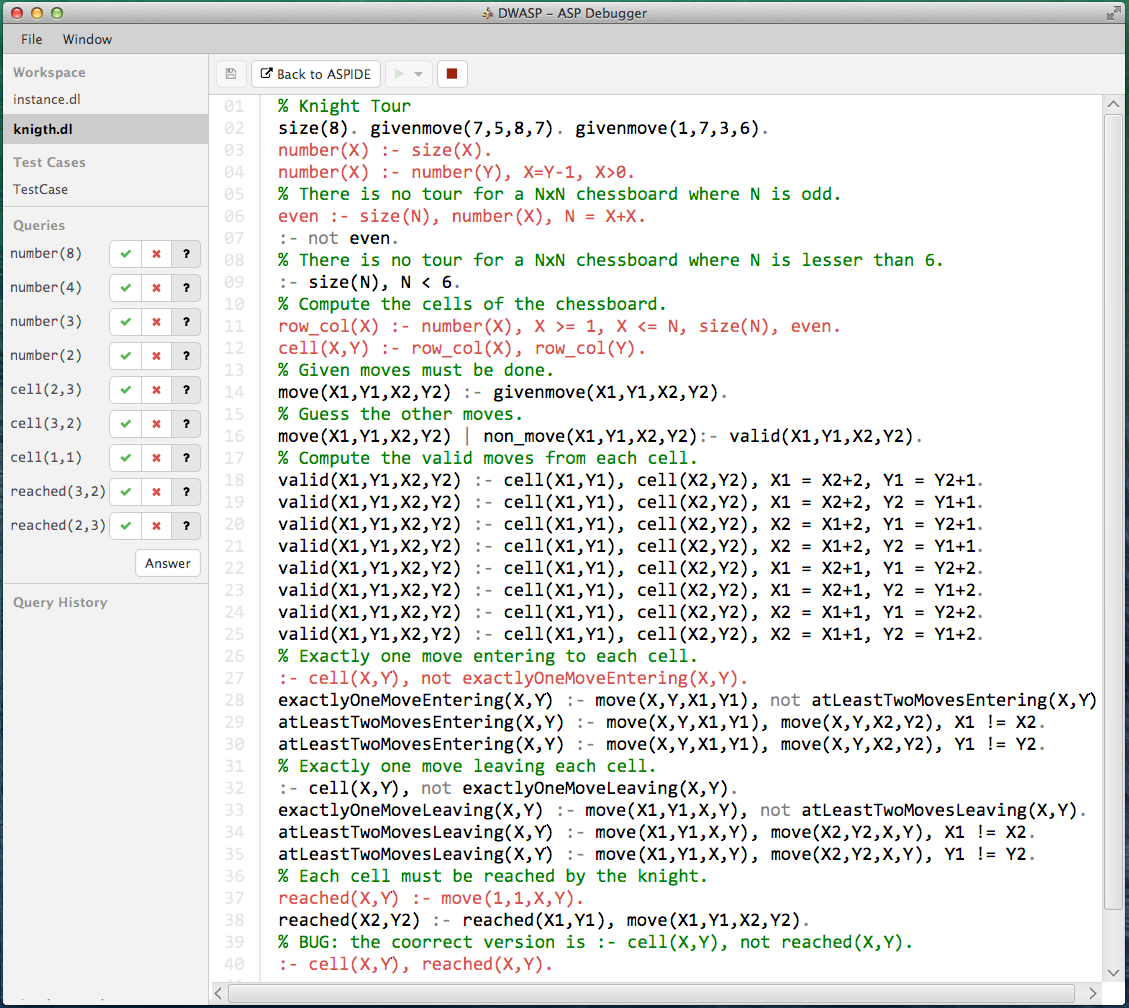}
	}
	\subfigure[Knight Tour (identified)]{\label{fig:knigth:answer}
		\includegraphics[width=0.75\textwidth]{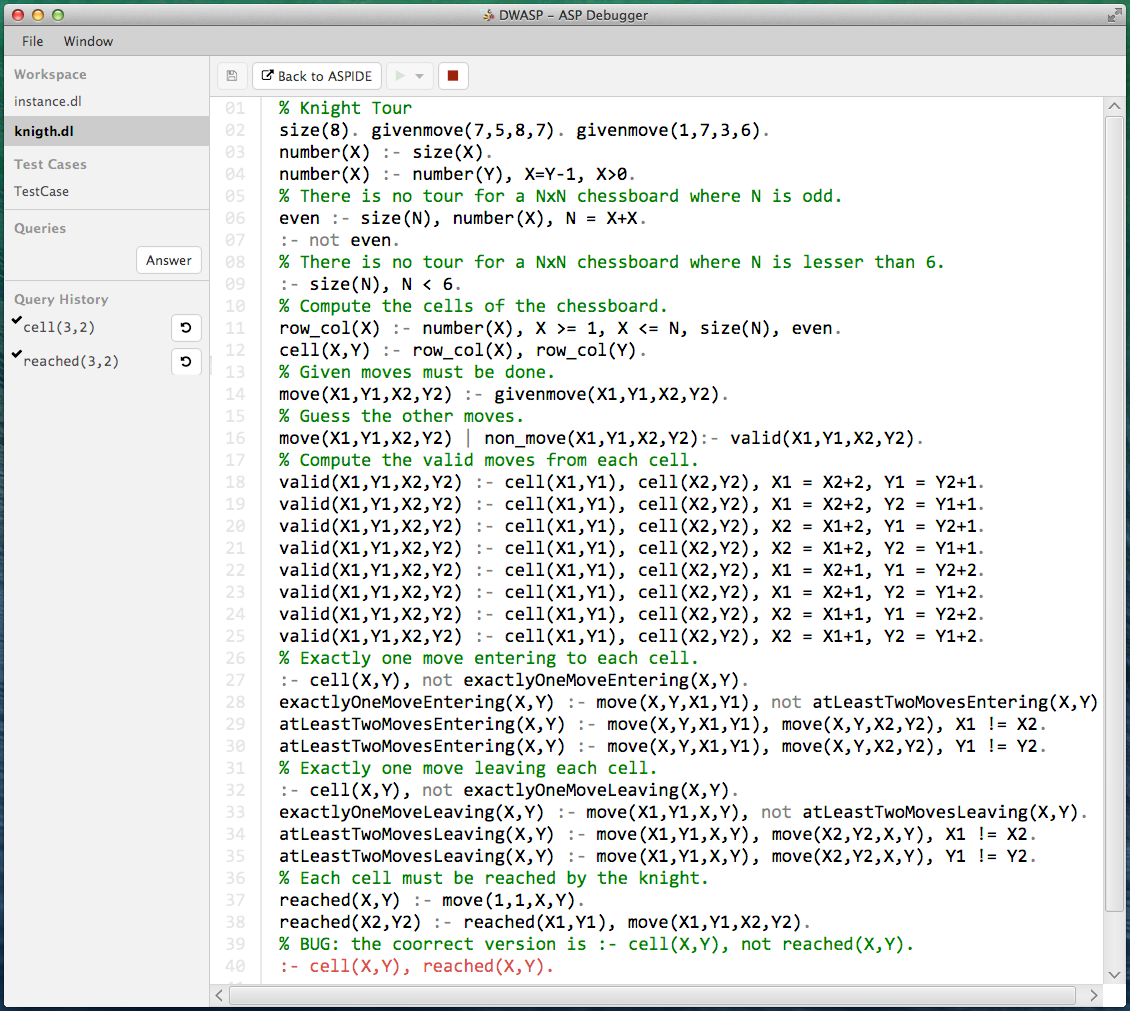}
	}
	\caption{Debugging the Knight Tour encoding from ASP Competition 2011.}
	\label{fig:dwasp-gui2}
\end{figure}
\paragraph{Architecture.}%
The architecture of the visual debugging system is depicted in Figure~\ref{fig:architecture}.
There are two main components: the \dwaspgui and
the debugger presented in the previous section. 
The \dwaspgui implements the graphical user interface, handles input and output, 
and controls the invocation of the debugger. 
In particular, the \dwaspgui wraps both \gringowrapper and \dwasp during the entire debugging session, so that the user can control them from a more friendly graphical environment. The components implement an interaction protocol that allows to exchange information and maintain the debugger in execution until it is interrupted by the user.
%
%
%


\paragraph{User Interface.}%
An instance of the \dwaspgui running Example~\ref{ex:test-case} is depicted in Figure \ref{fig:3col:main}).
The main window is split in two parts. 
The panels devoted to the specification of the inputs are on the left. There, the files containing the ASP program in input are listed below the label ``Workspace'', and the files containing test cases are listed below the label ``Test Case''. Indeed, the user can specify several test case for the same program, and the interface allows to debug one case at time.
Test cases are provided by the user as text files according to a simple syntax. 
For each atom $a$ that is expected to be true (resp. false) in the answer set, the user writes a statement \textit{assertTrue($a$)} (resp. \textit{assertFalse($a$)}).
The test case of Example~\ref{ex:test-case} was encoded as depicted in Figure~\ref{fig:3col:test}.
On the right middle part of the window (see Figure \ref{fig:3col:main}) there is a program editor featuring syntax highlights, where the user can edit both program and test case files. 
On top of the program editor is a tool-bar containing the buttons for running or concluding a debugging session on a specific test case. 
The button with a red square icon is used to stop a running debugger.
The test case to run can be selected from a drop-down list that is enabled by clicking on the run button (having a green triangle as icon). 
When a debugging session is started, \dwasp returns a minimal reason of incoherence $\mathcal{R}$, which is interpreted by the \dwaspgui by highlighting the corresponding non-ground rules in red. 
In particular, atoms of the form $\debug{i}{\dots}$ cause the highlight of the $i$-th rule, and when there are only atoms of the form $\support{a(\dots)}$, the rules having atoms of predicate $a$ in the head are highlighted.
The user can inspect a highlighted rule hovering over such a rule with the cursor, and the \dwaspgui shows shows in a pop-up both a substitution and a ground version of the rule causing the incoherence.
To have an idea of this functionality, the faulty constraints of
Example~\ref{ex:test-case} is shown highlighted and inspected by the user in Figure~\ref{fig:3col:main}.

The identification of the faulty constraints of Example~\ref{ex:test-case} is rather straightforward.
To illustrate how the interface handles more complex programs, and demonstrating the query feature of \dwasp, we purposely modified (introducing a simple bug in the last rule) the encoding of the \textit{Knight Tour} used in the ASP Competitions 2011.
The result obtained by running the debugger on a simple instance of that problem with the buggy encoding is depicted in Figure~\ref{fig:knigth:answer}. In this case, \dwasp identifies at first a number of rules as cause for the incoherence, but just one is guilty. 
At the same time \dwasp computes a set of possible queries to be answered by the user.
Queries are displayed in \dwaspgui on the left panel, and the user can answer yes (resp. no) by clicking on the green (resp. red) sign, or can leave the query unanswered. 
Note that the user can answer several queries at once and in random order, a feature not available in the command-line interface of \dwasp.
In our debugging session we answer that we expect \textit{cell(3,2)} and \textit{reached(3,2)} to be true, and as a result \dwasp is able to precisely identify the bug as depicted in Figure~\ref{fig:knigth:query}, and no further query can be posed to the user.
Note that, on the left \dwaspgui displays a query history where the answers given by the user can be inspected and possibly unrolled.

\begin{figure}[t!]
	\centering
	\subfigure[Run from the Main window]{\label{fig:aspide:run}
		\includegraphics[width=0.92\textwidth]{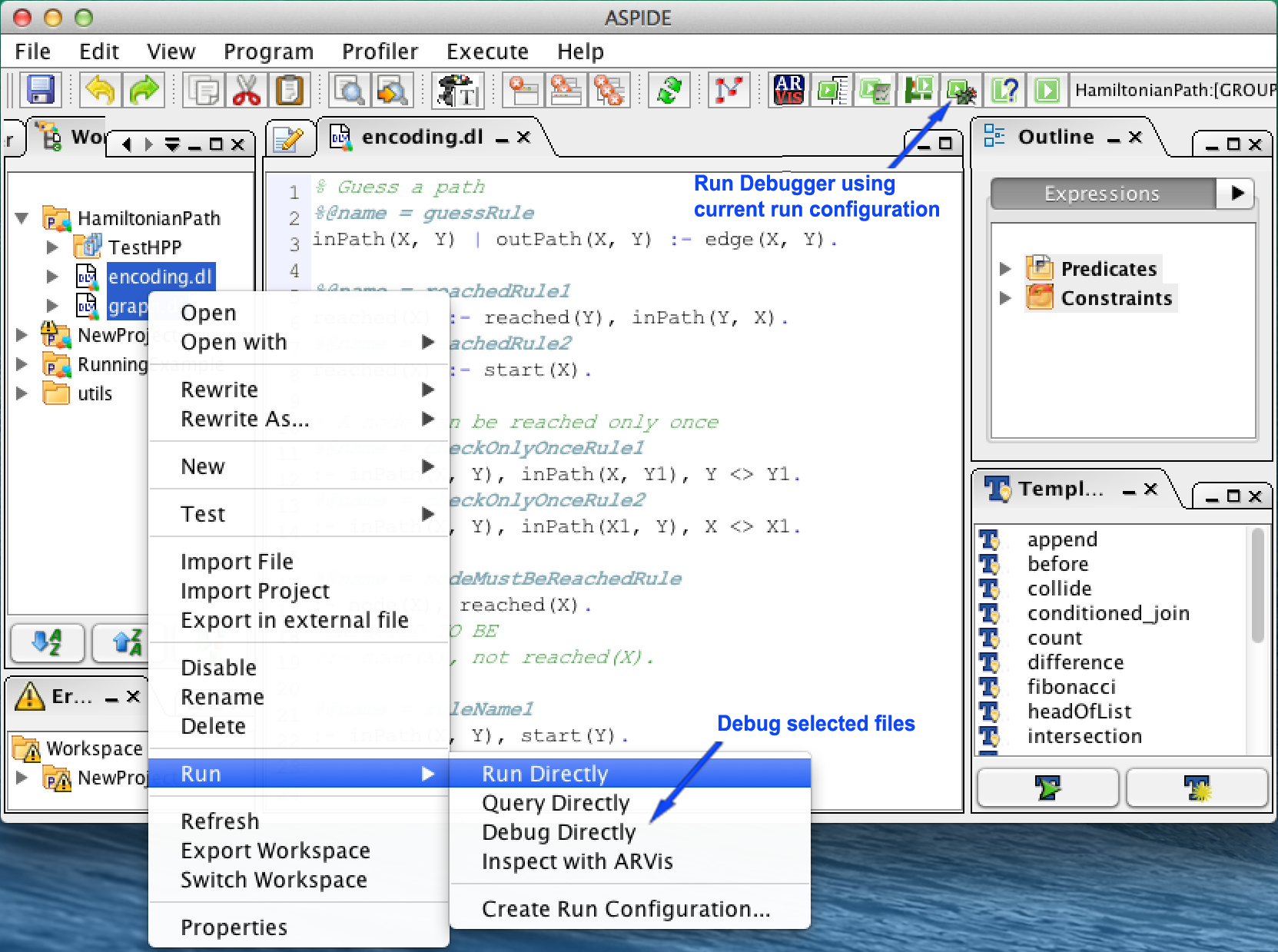}
	}
	\subfigure[Run from Results Window]{\label{fig:aspide:results}
		\includegraphics[width=0.92\textwidth]{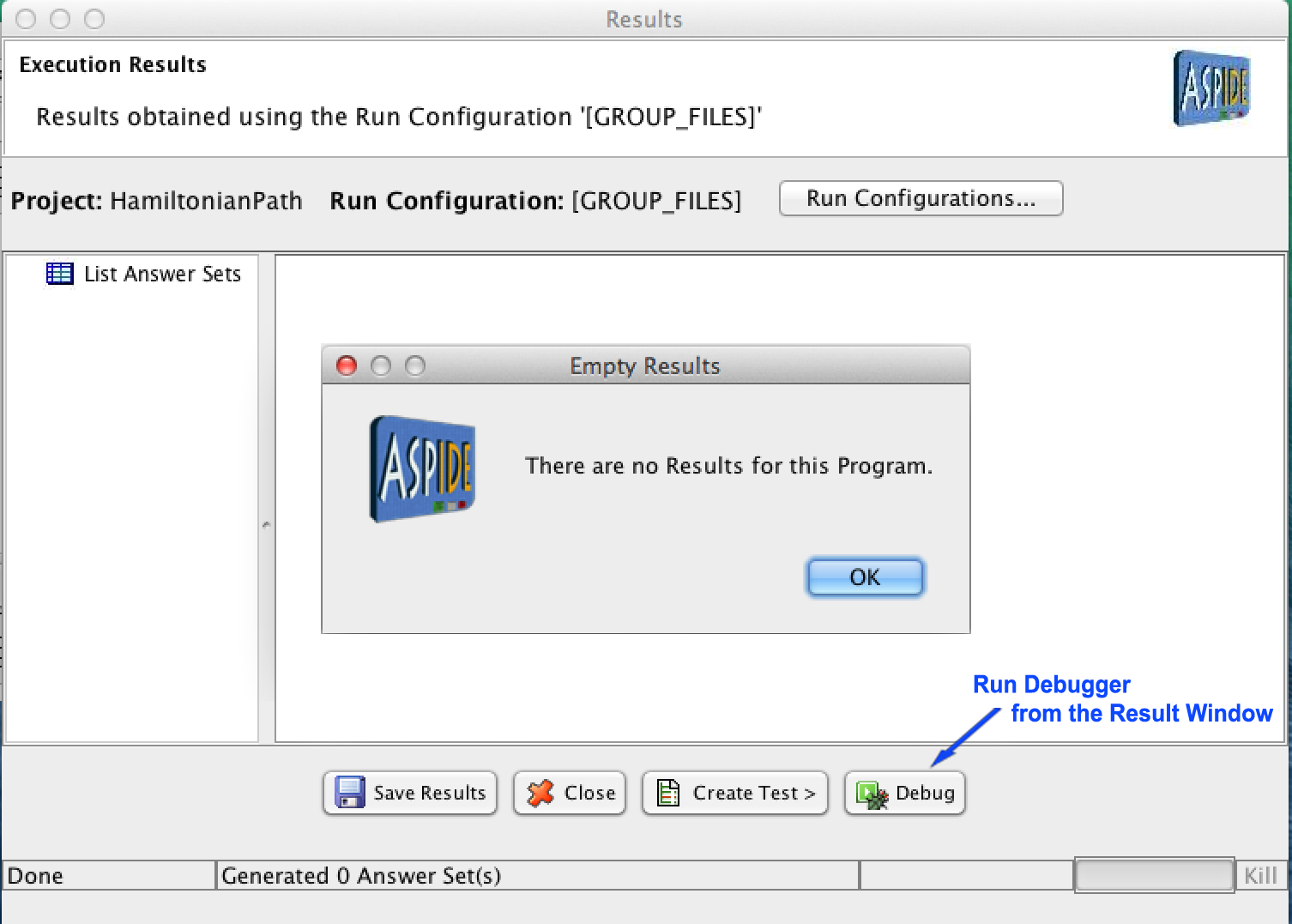}
	}
	\caption{Debugging the Knight Tour encoding from ASP Competition 2011.}
	\label{fig:aspideintegration}
\end{figure}
\subsection{Integration with \aspide}
\label{sec:system-description:aspide}
We integrated the graphical user interface \dwaspgui inside the integrated development environment \aspide \cite{DBLP:conf/lpnmr/FebbraroRR11} by developing a plug-in connector and extending some components of the user interface. 

As a result, \aspide offers several options for invoking the \dwaspgui. 
In the simplest scenario the user has to press a button in the main tool bar of \aspide.
This button (having a bug as icon) is pointed by a blue arrow in Figure~\ref{fig:aspide:run} and starts the debugger using current run configuration.
Alternatively, he/she can run the \dwaspgui on some specific files of a project.
Figure~\ref{fig:aspide:run} depicts also this use case, where the user (i) selects some files to debug in the project explorer (they are highlighted in blue on the left hand side), and (ii) right-clicks the mouse to select (iii) the menu item labeled "Debug Directly" (that starts the \dwaspgui).

Another handy option available in \aspide lets the user call the debugger directly from the window presenting the results of the execution of a solver. 
In Figure~\ref{fig:aspide:results} we see that current execution terminated and no answer set has been found, thus the user can start the debugger directly from that window by clicking on the dedicated button labeled "Debug" from the toolbar in the bottom left side of the window. 

In all the mentioned use cases the user has to do nothing for configuring \dwaspgui, 
all the needed intermediate files are automatically generated by \aspide, 
and the debugger is launched automatically on the specified program.

We now use a more general use case to present the combination of the new debugger with the unit testing framework of the IDE~\cite{DBLP:conf/inap/FebbraroLRR11}.
Unit testing is a white-box testing technique that requires to assess separately subparts of a source code called units to verify whether they behave as intended. 
\aspide supports a testing language that allows the developer to specify the rules composing one or several units, specify one or more inputs and assert a number of conditions on both expected outputs and the expected behavior of sub-programs. 
Test case specifications can be developed and run in \aspide, and the assertions are automatically verified by analyzing the output of the execution. 
\aspide provides the user with some graphic tools for simplifying the development and the inspection of results of test cases executions.%
\footnote{A complete description of the \aspide framework for unit testing is out of the scope of this paper, for more details we refer the reader to~\cite{DBLP:conf/inap/FebbraroLRR11}.}
In the \aspide testing tool one could easily identify a failing test case, but there was no support for understanding the cause of failure of a test case.
We solved this issue by connecting the debugger in the unit testing framework.

The work-flow for testing and debugging is now illustrated by using the program and the test case presented in Example~\ref{ex:test-case-ctd}.
In Figure~\ref{fig:3col:aspide}, we present a screen-shot of \aspide with a workspace that has the buggy graph colorability encoding loaded (see the file \textit{threecol.dl} on the left panel of \aspide). 
Test cases in \aspide can be defined according to a rich test case specification language that was introduced in~\cite{DBLP:conf/inap/FebbraroLRR11}, and that inspired the specification of test cases in \dwasp.
Actually, every \dwasp test case is also a valid unit test case specification for \aspide, modulo some additional syntactic construct needed to configure the testing tool with the program file to be tested and its input.
According to Example~\ref{ex:test-case-ctd} we defined the test case in file \textit{newTestFile.test}, 
and its specification is shown in the central editor window of \aspide of Figure~\ref{fig:3col:aspide}.
When test cases are executed, a new window is opened (small window in Figure~\ref{fig:3col:aspide}), where the result of the execution is shown.
Failing test cases are highlighted in red.
The user can start debugging of one of the failing test case by just clicking on the \emph{Debug} button, and the \dwaspgui window of Figure~\ref{fig:3col:main} is displayed.
Note that, once more the user has to do nothing for configuring \dwaspgui, 
all the needed files are automatically generated by \aspide.
Finally, the \emph{Back to ASPIDE} button allows to see the faulty rule highlighted also in \aspide.

\section{Performance Analysis}\label{sec:experiments}
We have assessed the performance of our implementation by comparing it with the debugger \textsc{Ouroboros}~\cite{DBLP:journals/tplp/OetschPT10,DBLP:conf/lpnmr/PolleresFSF13}, which is the only maintained solution able to cope with non-ground programs.
In particular we have employed the same ASP encodings and instances taken from ASP competitions that have been used in \cite{DBLP:conf/lpnmr/PolleresFSF13} for analyzing the performance of a debugger. The benchmark considered are Graph Colouring, Hanoi Tower, Knights Tour and Partner Units. 
For grounding we use \gringo (v4.4.0) in both methods.
The comparison is done by measuring grounding size and running time of \gringo for both debugging tools.

\begin{table}[b!]\label{tab:experiments}
	\caption{
		Comparison of grounding programs produced by \gringowrapper and \textsc{Ouroboros}.
	}
	\tabcolsep=0.050cm	
	\centering
	\begin{tabular}{lrrrrrrrr}
		\hline\hline
		& & & \multicolumn{2}{c}{\gringo} & \multicolumn{2}{c}{\gringowrapper} & \multicolumn{2}{c}{\textsc{Ouroboros}}\\
		\cmidrule{4-5}
		\cmidrule{6-7}
		\cmidrule{8-9}
		Benchmark	&	Instance	& \#n-ground	&	\#ground	&	time (s)	& \#ground &	time (s)	&	\#ground	&	time (s)\\
		\hline
		Graph Col	&	1-125	&	1\,672	&	6145	&	0.22	&	8\,031	&	0.63	&	19\,020	&	0.95\\
		Graph Col	&	11-130	&	1\,757	&	6455	&	0.21	&	8\,416	&	0.68	&	19\,845	&	1.10\\
		Graph Col	&	21-135	&	1\,986	&	7269	&	0.24	&	9\,305	&	0.73	&	21\,174	&	1.04\\
		Graph Col	&	30-135	&	1\,794	&	6597	&	0.25	&	8\,633	&	0.64	&	20\,502	&	1.02\\
		Graph Col	&	31-140	&	2\,039	&	7467	&	0.22	&	9\,578	&	0.67	&	21\,887	&	1.03\\
		Graph Col	&	40-140	&	2\,219	&	8097	&	0.32	&	10\,208	&	0.68	&	22\,517	&	1.03\\
		Graph Col	&	41-145	&	2\,262	&	8260	&	0.25	&	10\,446	&	0.68	&	23\,195	&	1.04\\
		Graph Col	&	51-120	&	2\,405	&	8773	&	0.36	&	11\,034	&	0.76	&	24\,223	&	1.05\\
		Hanoi	&	09-28	&	104	&	31\,748	&	0.40	&	94\,166	&	1.61	&	1\,739\,800	&	8.09\\
		Hanoi	&	11-30	&	106	&	34\,056	&	0.33	&	100\,942	&	1.58	&	1\,864\,222	&	9.50\\
		Hanoi	&	15-34	&	110	&	38\,672	&	0.38	&	114\,524	&	2.11	&	2\,112\,986	&	9.43\\
		Hanoi	&	16-40	&	100	&	27\,137	&	0.35	&	80\,615	&	1.40	&	1\,491\,281	&	7.04\\
		Hanoi	&	22-60	&	102	&	28\,311	&	0.29	&	84\,644	&	1.43	&	1\,678\,483	&	7.80\\
		Hanoi	&	38-80	&	106	&	34\,044	&	0.23	&	100\,942	&	1.68	&	1\,864\,250	&	8.53\\
		Hanoi	&	41-100	&	104	&	31\,738	&	0.39	&	94\,166	&	1.52	&	1\,739\,830	&	13.24\\
		Hanoi	&	47-120	&	99	&	25\,968	&	0.19	&	77\,227	&	1.49	&	1\,429\,695	&	6.90\\
		Knights Tour	&	01-08	&	21	&	1\,384	&	0.34	&	3\,413	&	1.14	&	12\,985\,716	&	59.44\\
		Knights Tour	&	03-12	&	22	&	3\,356	&	0.13	&	8\,652	&	0.60	&	>72\,244\,034	&	>300\\
		Knights Tour	&	05-16	&	21	&	6\,192	&	0.16	&	16\,285	&	0.64	&	>69\,494\,641	&	>300\\
		Knights Tour	&	06-20	&	21	&	9\,892	&	0.16	&	26\,321	&	0.88	&	>62\,785\,993	&	>300\\
		Knights Tour	&	07-30	&	21	&	22\,922	&	0.40	&	61\,911	&	1.13	&	>59\,166\,564	&	>300\\
		Knights Tour	&	08-40	&	21	&	41\,352	&	0.44	&	112\,501	&	1.27	&	>54\,944\,042	&	>300\\
		Knights Tour	&	09-46	&	21	&	55\,002	&	0.53	&	150\,055	&	1.58	&	>56\,443\,633	&	>300\\
		Knights Tour	&	10-50	&	22	&	65\,182	&	0.86	&	178\,094	&	2.15	&	>62\,402\,315	&	>300\\
		Partner Units	&	176-24	&	68	&	12\,563	&	0.22	&	14\,218	&	1.03	&	102\,023	&	1.47\\
		Partner Units	&	23-30	&	117	&	39\,231	&	0.29	&	42\,106	&	1.20	&	276\,645	&	2.11\\
		Partner Units	&	29-40	&	108	&	59\,979	&	0.34	&	64\,413	&	1.67	&	629\,639	&	3.35\\
		Partner Units	&	207-58	&	136	&	158\,564	&	0.61	&	168\,289	&	3.07	&	2\,726\,182	&	11.94\\
		Partner Units	&	204-67	&	141	&	218\,808	&	0.78	&	231\,083	&	5.30	&	4\,280\,282	&	17.79\\
		Partner Units	&	175-75	&	290	&	682\,015	&	2.10	&	699\,472	&	16.03	&	8\,604\,415	&	40.60\\
		Partner Units	&	52-100	&	254	&	952\,363	&	2.68	&	979\,603	&	16.61	&	20\,125\,857	&	90.10\\
		Partner Units	&	115-100	&	254	&	952\,369	&	2.86	&	979\,759	&	16.07	&	20\,317\,011	&	94.26\\
\hline\hline
	\end{tabular}
\end{table}
Results are reported in Table~\ref{tab:experiments}, where the first two columns represent the benchmarks and the instances considered, respectively. The third column is the number of rules of the non-ground program, while \#ground and time(s) are the size of the ground program and the execution time of \gringo, respectively.
In our approach the increase in grounding size is due to the fact that the \gringowrapper disables the optimizations performed by \gringo, whereas in \textsc{Ouroboros} the grounding of an ASP program modeling debugging is required.
Considering the instances that were groundable with \gringo within 5 minutes by our Intel Core i7-3667U machine with 8GB of RAM, we report that in our approach the size of the instantiation of the debugging program is from 1.5 to 3 times the size of grounding the original program, whereas the debugging program of \textsc{Ouroboros} generates groundings that are from 50 times up to 9382 times larger than the original program. 
Note that, the performance of our approach is only limited by the performance of the underlying solver, whereas in the case of \textsc{Ouroboros} the limit is in the grounding of the debugging program, which may not be feasible.

\section{Usability Test}
\label{sec:students}
In order to assess usability of the interface and degree of appreciation for our debugger tool we have set up a usability experiment. 
The test has been conducted during a regular class of the course on Answer Set Programming given by Prof. Nicola Leone for Bachelor Degree students in Computer Science at University of Calabria.
The assessment was executed on January, 19 2017 during a regular practice class at the end of the course.
The tool was not explained in a previous lecture and the experiment was not announced in advance to ensure that: 
$(i)$ users (i.e., students) never used the tool before, and 
$(ii)$ represent a sample of a distribution including both sufficiently skilled and less skilled ASP programmers;
$(iii)$ do not include only those that are interested in tools or have specific bias on using programming environments.

\paragraph{Test setup.} 
We prepared a test in which students were asked to find a bug on three ASP encodings.
We selected for this purpose tree well-known problems, and modified the encodings available from the third ASP competition~\cite{DBLP:journals/tplp/CalimeriIR14} website for the following problems: 3-Colorability, Hamiltonian Path, and Stable Marriage.
The first encoding is a classical example, which was familiar to the majority of users since it was presented during a lecture few months before to explain the guess and check methodology, and comprises only two (non ground) rules.
The encodings for the second and third problem were completely new to the audience, and they are much more complex featuring 6 rules each. In particular, the encoding of Stable Marriage is the least intuitive one and thus it was expected to be the hardest to fix.

We modified one constraint per encoding so that some expected answer sets were missing on a given test case.
All the encodings use basic features of the language, i.e, disjunctive normal logic programs as described in the founding paper by Gelfond and Lifschitz~\cite{DBLP:journals/ngc/GelfondL91} as well as in ASPCore 2.0 syntax~\cite{DBLP:journals/ai/CalimeriGMR16}. 
This choice ensures that the encodings and causes of faults in every test are comprehensible to the audience and no knowledge of advanced language constructs -- not covered by the lectures -- is required.
The students were provided with complete textual descriptions of the problems, including an explanation of the signatures and meaning of input and output predicates, one buggy encoding and one test instance per problem.
Student were working on own notebooks where \aspide with debugger was pre-installed and launched with a pre-loaded workspace containing one project per problem with all required files: problem description, encoding and sample instances. 

The test started after providing the users with: 
$(i)$ a description of the task to accomplish, 
$(ii)$ some minimal instructions on how to start the debugger and operate on the main buttons of the interface using a different example from the ones used in the test, and
$(iii)$ an anonymous questionnaire with a time annotating and debugger usage sheets for each problem.
In these sheets the students had to give notes on elapsed time and degree to which the debugger helped to find a bug.
The questionnaire, to fill at the end of the experience, contained the following questions:
\begin{itemize}
\item \textit{Do you find the debugger easy to use?}
\item \textit{Is bug detection faster using the debugger?}
\item \textit{Will you consider using it next time?}
\item \textit{How do you judge the user interface usability?}
\end{itemize}
The student could answer one of 
\textit{Strongly disagree, Disagree, Neither agree nor disagree, Agree, Strongly agree}  for the first three questions and one of \textit{Poor, Fair, Average, Good, Excellent} for the latter.

\newcommand{\larg}{6cm}
\begin{figure}[t!]
	\centering
	\subfigure[Do you find the debugger easy to use?]{\label{fig:usability:easy}
		\includegraphics[width=\larg]{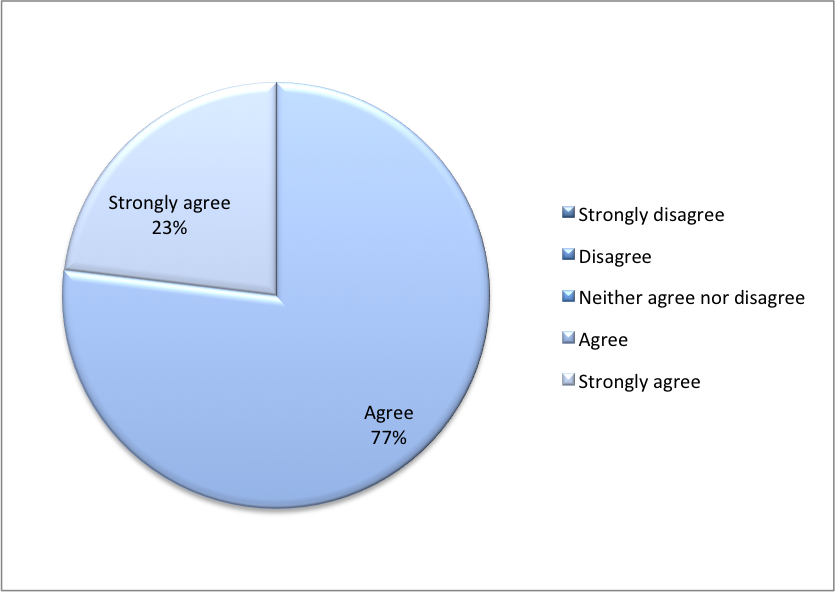}
	}
	\subfigure[Is bug detection faster using the debugger?]{\label{fig:usability:faster}
		\includegraphics[width=\larg]{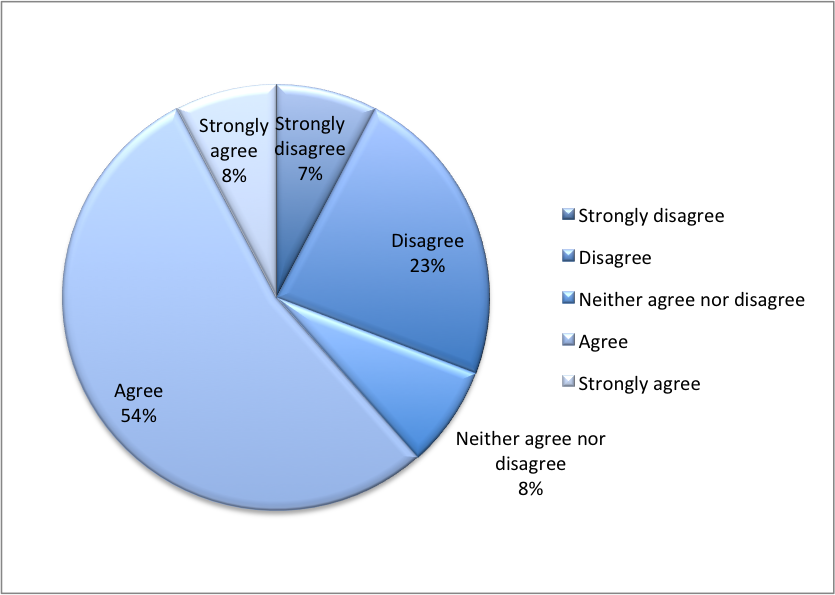}
	}
	\subfigure[Will you consider using it next time?]{\label{fig:usability:again}
		\includegraphics[width=\larg]{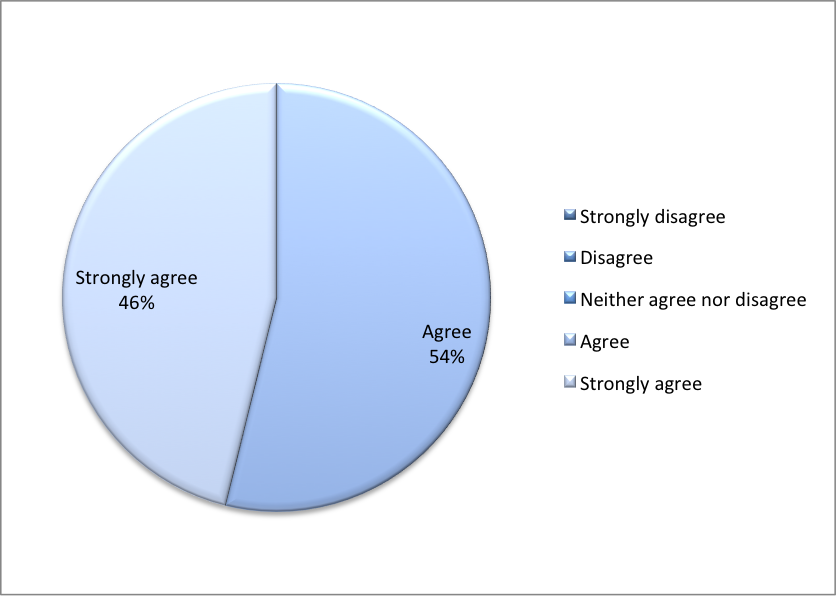}
	}	
	\subfigure[How do you judge the user interface usability?]{\label{fig:usability:overall}
		\includegraphics[width=\larg]{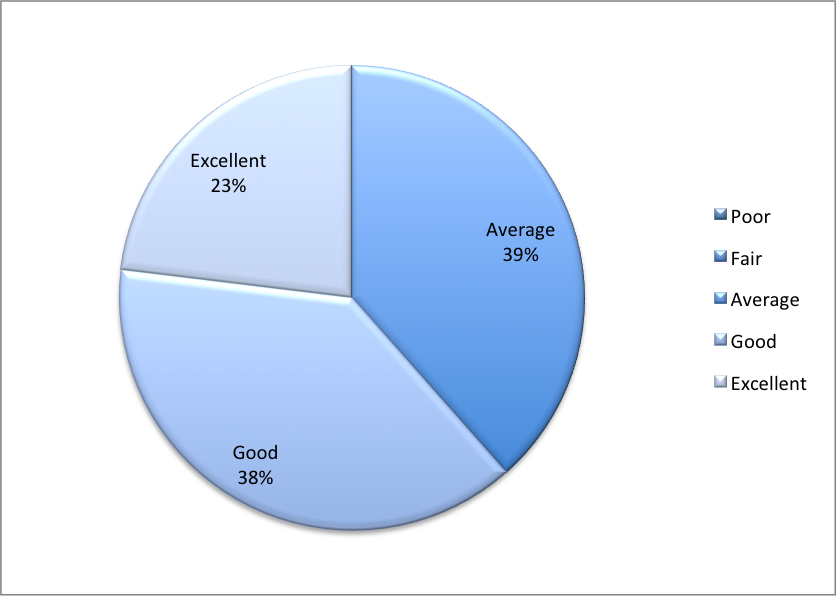}
	}	
	\caption{Results of the usability testing.}
	\label{fig:usability}
\end{figure}

\paragraph{Collection of results and hypothesis testing.}
We collected the results provided by 26 students on three usage tests (one per problem) of 30 minutes each. 
These number are in line with the Jakob Nielsen recommendation~\cite{DBLP:conf/chi/NielsenL93} for finding serious usability problems in user interfaces.%
\footnote{The Jakob Nielsen claim roughly says that few testers (no more than five users) and running as many small tests as you can afford is enough to identify a serious usability problem~\cite{DBLP:conf/chi/NielsenL93}.}
Moreover, to test the validity of our conclusions we applied the Kolmogorov Smirnov (K-S) test on the results, that refused the null hypothesis with an accuracy  $\geq$ 95\%.%
\footnote{The K–S test is one of the most useful and general nonparametric tests, that we used because it is more powerful than other methods (e.g. Chi-squared tests) when the size of the sample is below 50 elements, and some events (possible answers) have low frequency.}

\paragraph{Debugger applicability for complex problems.} 
One of our test goals was to determine the impact of bug fixing complexity on the applicability of the debugger. 
To verify that a problem is more difficult to solve than another we measured average bug fixing times on "fixed" cases as well as the number of cases in which a bug was identified.
3-Colorability -- the easiest problem -- was solved by all students but one in 6.4 minutes on average, and only 38.5\% declared the debugger was actually used for finding the bug. 
Hamiltonian Path required 9.1 minutes on average to be fixed, of which 91\% declared the debugger was used, and only one student failed the test.
Stable marriage was fixed in 8.2 minutes on average, and 100\% of the students declared the debugger was used, and two students failed the test.
For the sake of completeness, we observed that the student failing in 3-Colorability, failed also on Stable Marriage, and could solve Hamiltonian path in 25 minutes (the maximum, and clearly an outlier), thus we believe this was just a non proficient student with limited understanding of the language.
Thus, from this findings we conclude that the debugger was used more (and, thus empirically it was more useful) as the complexity of finding the bug increases.

\paragraph{Probing the opinion of the users.} 
Results of the questionnaire are summarized in Figure~\ref{fig:usability}.
All students agreed that the debugger is easy to use -- 77\% of students answered ``Agree'' and the 23\% ``Strongly agree'' to the first question as shown in Figure~\ref{fig:usability:easy}.
Concerning the second question (see Figure~\ref{fig:usability:faster}), 64\% conclude that using the debugger accelerates the bug fixing process (of which 8\% strongly agrees), 8\% is neutral, and 30\% disagrees. 
Interestingly, the last group includes all those students that failed at least one test as well as some of those that did not use the debugger for some test.
We interpret this result as follows: since the debugger was not needed to solve the easiest test, one cannot agree in general that it always makes bug fixing faster.
This explanation agrees with the message that comes from results to the third question presented in  Figure~\ref{fig:usability:again}. 
In this case all students would consider using the debugger again -- actually 46\% strongly agree.
This further outlines that also the critical users found it better to have our tool at their disposal. 
Finally, with the last question we asked whether they were satisfied with the user interface. 
The results are positive (see Figure~\ref{fig:usability:overall}): No student found the interface insufficient, 38\% claims it is a good interface, and 23\% finds it excellent.

We can conclude that our debugger was considered easy to use, and effective when the difficulty of bug-fixing is high; moreover, no serious usability problem was revealed, and the user interface was perceived to be largely acceptable.

\section{Related Work}
\label{sec:related}
There are multiple approaches to ASP debugging suggested in the literature including algorithmic~\cite{DBLP:conf/asp/BrainV05,Syrjanen2006}, stepping-based~\cite{DBLP:conf/lpnmr/OetschPT11} and meta-programming~\cite{Brain2007a,DBLP:conf/aaai/GebserPST08,DBLP:journals/tplp/OetschPT10,DBLP:conf/lpnmr/PolleresFSF13,DBLP:conf/aaai/Shchekotykhin15} methods. 
%
%
%
Among the algorithmic approaches \textsc{ideas}~\cite{DBLP:conf/asp/BrainV05} aims at explaining: 
(a) why a set of atoms $S$ is in an answer set $M$, and 
(b) why $S$ is not in any answer set. 
%
\textsc{ideas} allows a programmer: (1) to query for an explanation of an observed fault, (2) to analyze the obtained results and (3) reformulate the query to make it more precise. In our approach refinements are found automatically once the user provides additional knowledge on an expected answer set, thus, making the steps (2) and (3) obsolete.

Meta-programming debuggers use a program over a meta language -- a kind of ASP solver simulation -- to manipulate a program over an object language -- the faulty program. 
Each answer set of a meta-program comprises a \emph{diagnosis}, which is a set of meta-atoms describing the cause why some interpretation of the faulty program is not its answer set. 
The \textsc{spock}~\cite{DBLP:conf/aaai/GebserPST08} and \textsc{Ouroboros}~\cite{DBLP:journals/tplp/OetschPT10,DBLP:conf/lpnmr/PolleresFSF13} debuggers enable the identification of faults connected with over-constraint problems and unfounded sets. 
Both approaches represent the input program in a reified form allowing application of a debugging meta-program. In case of \textsc{spock} the debugging can be applied only to grounded programs, whereas \textsc{Ouroboros} can tackle non-grounded programs as well. 
Our approach does not fall in the meta-programming classification because it does not need any reification, nor a specific debugging program that manipulates the reified input program. 
These design choices are the main reason why meta-programming are affected by the grounding blowup (the grounding of the meta-program could be huge)~\cite{DBLP:conf/lpnmr/PolleresFSF13}.
Thus, the ground debugging program has to comprise all atoms explaining all possible faults in an input faulty program, which is not the case in our approach. 
Moreover, our approach generalizes the query-based method built on top of \textsc{spock}~\cite{DBLP:conf/aaai/Shchekotykhin15} by enabling its application to non-ground programs.
We also observe that our approach works in a radically different way with respect to meta-programming ones, since 
we just add a marker to each rule and compute (and minimize) reasons of incoherence.
Another difference is that \textsc{Ouroboros}, in case the bug is caused by an unfounded loop, is able to provide a loop comprising the atom. This information is missing in our approach, which just treats unfounded loops as missing support.

The approach of \textsc{smdebug}~\cite{Syrjanen2006} addresses debugging of incoherent non-disjunctive ASP programs by adaption of Reiter's model-based diagnosis. 
Similarly to our approach the debugger focuses on analyzing contradictions, but cannot detect problems arising due to some atom missing support (since only odd loops are considered to be errors). 

There are other approaches enabling faults localization in ASP, but not directly comparable with \dwasp, including Consistency-Restoring Prolog~\cite{Balduccini2003}, translation of ASP programs to natural language~\cite{DBLP:conf/icai/MikitiukMT07}, visualization of justifications for an answer set~\cite{DBLP:journals/tplp/PontelliSE09} as well as stepping through an ASP program~\cite{DBLP:conf/lpnmr/OetschPT11}. 
In \cite{DBLP:conf/iclp/LiVPSB15}, the authors present a debugging technique for normal ASP programs that is based on inductive logic programming (ILP) and test cases. The idea is to allow the programmer to specify test cases modeling features that are expected to appear in some solution and those that should not. 
These are used to to revise the original program semi-automatically so that it satisfies the stated properties.
This approach offers the possibility to learn rules (and modifications of rules), whereas \dwasp focus only on identifying the buggy rules of a given program.
Combining these approaches with ideas implemented in \dwasp is part of our future work.

In \cite{DBLP:conf/lpnmr/0001ST15,DBLP:journals/tplp/0001T16} bugs are studied in terms of a set of culprits (atoms) using semantics which are weaker than the answer set semantics. A technique for explaining the set of culprits in terms of derivations is also provided. 
Approaches explaining bugs with the truth of a set of atoms are, in a sense, complementary to our approach (we identify the rules involved in a conflict). 

In \cite{DBLP:journals/corr/DassevilleJ15} the web-based programming environment for the IDP system is presented that also features a debugging approach based on the computation of a reason of incoherence. This debugger does not feature a question-answering schema that is fundamental for reducing the set of buggy rules.
Moreover, we are not aware of any IDE for ASP that provides a tight combination of debugging and unit testing environments as the one presented in this paper.

\section{Conclusion}
\label{sec:conclusion}
ASP features an intuitive syntax and a well-known semantics, nonetheless the process of finding bugs in logic programs can be non trivial and is often a tedious task.
For this reason, valid ASP debuggers have emerged during the recent years. 
The most prominent approaches, using ASP itself to compute explanations, are however affected by two main issues somehow limiting their applicability some in practical cases: (i) the grounding blowup, that may make impossible to compute the causes of a bug; and, (ii) the overwhelming number of produced explanations, which might be impossible to be browsed by users.

In this paper we propose a novel debugging approach for non-ground ASP programs that is not affected by both the above issues.
Indeed, it points the user directly to a set of rules involved in the bug, and --importantly-- allows to refine that set interactively by asking the user specific questions on an expected answer set, until the bug can be easily identified.

The new approach has been implemented in the \dwasp Debugger, which was obtained by properly combining the grounder \gringo with an extended version of the ASP solver \wasp.
An empirical analysis shows that the new debugger is not affected by the grounding blowup, and can handle instances that are pragmatically out of reach for state-of-the-art meta-programming-based debuggers.

The \dwasp Debugger has been complemented by a user-friendly graphical interface, called \dwaspgui.
The graphical interface improves the user-experience of debugging ASP programs, as demonstrated by running a usability test on a class of students attending a university course on ASP.
Indeed, besides the usual advantages provided by visual tools, the \dwaspgui simplifies two tasks that are not easy to carry out in the command line interface, namely: the definition of test cases and the interactive query answering. The query answering feature is made much more user-friendly, since the user can simply select answers by clicking on dedicated buttons, and several possible answers are presented to the user in a convenient list.
Problematic rules are outlined immediately in the text editor so the user is pointed immediately from the interface to sources of bugs.
\dwaspgui has also been integrated in \aspide, which was missing a complete debugger interface supporting non ground ASP programs. The integration includes specific support for creating failing test cases to debug directly from the unit test framework provided by \aspide supporting test-driven development.
The rapid identification of the cause of a failig test case is fundamental for test-driven development~\cite{DBLP:conf/xpu/FraserBCMNP03}.
With our extension \aspide turns into a more complete IDE by offering improved debugging support and a more effective test-driven development environment.

Concerning future works, one possibility would be to study a possible integration of our approach with existing ones.
Moreover, an interesting work would be to investigate if our debugging approach can be generalized also in the case when an extra, incorrect, answer set is provided.
We also plan to extend the tool in order to better handle some specific bugs related to missing support, in particular those due to the so-called \textit{unfounded sets}.

\myParagraph{Availability.} 
The \dwaspgui can be obtained from~ \texttt{https://github.com/gaste/} \texttt{dwasp-gui}, and 
\aspide from \texttt{http://www.mat.unical.it/ricca/aspide}, the plugin connector installation starts the first time the debugger is launched.

\myParagraph{Acknowledgments.}
The authors are grateful to Marc Deneker and Ingmar Dasseville for the fruitful discussions about debugging for logic programs and FO(ID) theories, and in particular for the useful suggestion improving the handling of bugs caused by atoms missing a supporting rule. 
The authors are also grateful to Roland Kaminski for providing \gringo without simplifications.


\bibliographystyle{acmtrans}

\begin{thebibliography}{}

\bibitem[\protect\citeauthoryear{Abseher, Gebser, Musliu, Schaub, and
  Woltran}{Abseher et~al\mbox{.}}{2016}]{DBLP:journals/fuin/AbseherGMSW16}
{\sc Abseher, M.}, {\sc Gebser, M.}, {\sc Musliu, N.}, {\sc Schaub, T.}, {\sc
  and} {\sc Woltran, S.} 2016.
\newblock Shift design with answer set programming.
\newblock {\em Fundam. Inform.\/}~{\em 147,\/}~1, 1--25.

\bibitem[\protect\citeauthoryear{Alviano and Dodaro}{Alviano and
  Dodaro}{2016}]{DBLP:journals/tplp/AlvianoD16}
{\sc Alviano, M.} {\sc and} {\sc Dodaro, C.} 2016.
\newblock Anytime answer set optimization via unsatisfiable core shrinking.
\newblock {\em {TPLP}\/}~{\em 16,\/}~5-6, 533--551.

\bibitem[\protect\citeauthoryear{Alviano, Dodaro, Leone, and Ricca}{Alviano
  et~al\mbox{.}}{2015}]{DBLP:conf/lpnmr/AlvianoDLR15}
{\sc Alviano, M.}, {\sc Dodaro, C.}, {\sc Leone, N.}, {\sc and} {\sc Ricca, F.}
  2015.
\newblock Advances in {WASP}.
\newblock In {\em {LPNMR}}. Lecture Notes in Computer Science, vol. 9345.
  Springer, 40--54.

\bibitem[\protect\citeauthoryear{Alviano, Dodaro, and Maratea}{Alviano
  et~al\mbox{.}}{2017}]{DBLP:conf/aiia/AlvianoDM17}
{\sc Alviano, M.}, {\sc Dodaro, C.}, {\sc and} {\sc Maratea, M.} 2017.
\newblock An advanced answer set programming encoding for nurse scheduling.
\newblock In {\em {AI*IA}}. Lecture Notes in Computer Science, vol. 10640.
  Springer, 468--482.

\bibitem[\protect\citeauthoryear{Aschinger, Drescher, Friedrich, Gottlob,
  Jeavons, Ryabokon, and Thorstensen}{Aschinger
  et~al\mbox{.}}{2011}]{DBLP:conf/cpaior/AschingerDFGJRT11}
{\sc Aschinger, M.}, {\sc Drescher, C.}, {\sc Friedrich, G.}, {\sc Gottlob,
  G.}, {\sc Jeavons, P.}, {\sc Ryabokon, A.}, {\sc and} {\sc Thorstensen, E.}
  2011.
\newblock Optimization methods for the partner units problem.
\newblock In {\em {CPAIOR}}. Lecture Notes in Computer Science, vol. 6697.
  Springer, 4--19.

\bibitem[\protect\citeauthoryear{Balduccini and Gelfond}{Balduccini and
  Gelfond}{2003}]{Balduccini2003}
{\sc Balduccini, M.} {\sc and} {\sc Gelfond, M.} 2003.
\newblock {Logic programs with consistency-restoring rules}.
\newblock In {\em AAAI Spring Symposium}. 9--18.

\bibitem[\protect\citeauthoryear{Balduccini, Gelfond, Watson, and
  Nogueira}{Balduccini et~al\mbox{.}}{2001}]{DBLP:conf/lpnmr/BalducciniGWN01}
{\sc Balduccini, M.}, {\sc Gelfond, M.}, {\sc Watson, R.}, {\sc and} {\sc
  Nogueira, M.} 2001.
\newblock The usa-advisor: {A} case study in answer set planning.
\newblock In {\em {LPNMR}}. Lecture Notes in Computer Science, vol. 2173.
  Springer, 439--442.

\bibitem[\protect\citeauthoryear{Baral}{Baral}{2010}]{DBLP:books/daglib/0040913}
{\sc Baral, C.} 2010.
\newblock {\em Knowledge Representation, Reasoning and Declarative Problem
  Solving}.
\newblock Cambridge University Press.

\bibitem[\protect\citeauthoryear{Brain and {De Vos}}{Brain and {De
  Vos}}{2005}]{DBLP:conf/asp/BrainV05}
{\sc Brain, M.} {\sc and} {\sc {De Vos}, M.} 2005.
\newblock Debugging logic programs under the answer set semantics.
\newblock In {\em Answer Set Programming}. {CEUR} Workshop Proceedings, vol.
  142. CEUR-WS.org.

\bibitem[\protect\citeauthoryear{Brain, Gebser, Schaub, Tompits, and
  Woltran}{Brain et~al\mbox{.}}{2007}]{Brain2007a}
{\sc Brain, M.}, {\sc Gebser, M.}, {\sc Schaub, T.}, {\sc Tompits, H.}, {\sc
  and} {\sc Woltran, S.} 2007.
\newblock {"That is Illogical Captain !" – The Debugging Support Tool spock
  for Answer-Set Programs : System Description}.
\newblock In {\em {SEA}}. 71--85.

\bibitem[\protect\citeauthoryear{Brewka, Eiter, and Truszczynski}{Brewka
  et~al\mbox{.}}{2011}]{DBLP:journals/cacm/BrewkaET11}
{\sc Brewka, G.}, {\sc Eiter, T.}, {\sc and} {\sc Truszczynski, M.} 2011.
\newblock Answer set programming at a glance.
\newblock {\em Commun. {ACM}\/}~{\em 54,\/}~12, 92--103.

\bibitem[\protect\citeauthoryear{Busoniu, Oetsch, P{\"{u}}hrer, Skocovsky, and
  Tompits}{Busoniu et~al\mbox{.}}{2013}]{DBLP:journals/tplp/BusoniuOPST13}
{\sc Busoniu, P.}, {\sc Oetsch, J.}, {\sc P{\"{u}}hrer, J.}, {\sc Skocovsky,
  P.}, {\sc and} {\sc Tompits, H.} 2013.
\newblock Sealion: An eclipse-based {IDE} for answer-set programming with
  advanced debugging support.
\newblock {\em {TPLP}\/}~{\em 13,\/}~4-5, 657--673.

\bibitem[\protect\citeauthoryear{Calimeri, Gebser, Maratea, and Ricca}{Calimeri
  et~al\mbox{.}}{2016}]{DBLP:journals/ai/CalimeriGMR16}
{\sc Calimeri, F.}, {\sc Gebser, M.}, {\sc Maratea, M.}, {\sc and} {\sc Ricca,
  F.} 2016.
\newblock Design and results of the fifth answer set programming competition.
\newblock {\em Artif. Intell.\/}~{\em 231}, 151--181.

\bibitem[\protect\citeauthoryear{Calimeri, Ianni, and Ricca}{Calimeri
  et~al\mbox{.}}{2014}]{DBLP:journals/tplp/CalimeriIR14}
{\sc Calimeri, F.}, {\sc Ianni, G.}, {\sc and} {\sc Ricca, F.} 2014.
\newblock The third open answer set programming competition.
\newblock {\em {TPLP}\/}~{\em 14,\/}~1, 117--135.

\bibitem[\protect\citeauthoryear{Dasseville and Janssens}{Dasseville and
  Janssens}{2015}]{DBLP:journals/corr/DassevilleJ15}
{\sc Dasseville, I.} {\sc and} {\sc Janssens, G.} 2015.
\newblock A web-based {IDE} for {IDP}.
\newblock {\em CoRR\/}~{\em abs/1511.00920}.

\bibitem[\protect\citeauthoryear{Dodaro, Gasteiger, Leone, Musitsch, Ricca, and
  Schekotihin}{Dodaro et~al\mbox{.}}{2016}]{DBLP:journals/tplp/DodaroGLMRS16}
{\sc Dodaro, C.}, {\sc Gasteiger, P.}, {\sc Leone, N.}, {\sc Musitsch, B.},
  {\sc Ricca, F.}, {\sc and} {\sc Schekotihin, K.} 2016.
\newblock Combining answer set programming and domain heuristics for solving
  hard industrial problems (application paper).
\newblock {\em {TPLP}\/}~{\em 16,\/}~5-6, 653--669.

\bibitem[\protect\citeauthoryear{Dodaro, Gasteiger, Musitsch, Ricca, and
  Shchekotykhin}{Dodaro et~al\mbox{.}}{2015}]{DBLP:conf/lpnmr/DodaroGMRS15}
{\sc Dodaro, C.}, {\sc Gasteiger, P.}, {\sc Musitsch, B.}, {\sc Ricca, F.},
  {\sc and} {\sc Shchekotykhin, K.~M.} 2015.
\newblock Interactive debugging of non-ground {ASP} programs.
\newblock In {\em {LPNMR}}. Lecture Notes in Computer Science, vol. 9345.
  Springer, 279--293.

\bibitem[\protect\citeauthoryear{Dodaro, Leone, Nardi, and Ricca}{Dodaro
  et~al\mbox{.}}{2015}]{DBLP:conf/rr/DodaroLNR15}
{\sc Dodaro, C.}, {\sc Leone, N.}, {\sc Nardi, B.}, {\sc and} {\sc Ricca, F.}
  2015.
\newblock Allotment problem in travel industry: {A} solution based on {ASP}.
\newblock In {\em {RR}}. Lecture Notes in Computer Science, vol. 9209.
  Springer, 77--92.

\bibitem[\protect\citeauthoryear{Eiter, Gottlob, and Mannila}{Eiter
  et~al\mbox{.}}{1997}]{DBLP:journals/tods/EiterGM97}
{\sc Eiter, T.}, {\sc Gottlob, G.}, {\sc and} {\sc Mannila, H.} 1997.
\newblock Disjunctive datalog.
\newblock {\em {ACM} Trans. Database Syst.\/}~{\em 22,\/}~3, 364--418.

\bibitem[\protect\citeauthoryear{Erdem, Gelfond, and Leone}{Erdem
  et~al\mbox{.}}{2016}]{DBLP:journals/aim/ErdemGL16}
{\sc Erdem, E.}, {\sc Gelfond, M.}, {\sc and} {\sc Leone, N.} 2016.
\newblock Applications of answer set programming.
\newblock {\em {AI} Magazine\/}~{\em 37,\/}~3, 53--68.

\bibitem[\protect\citeauthoryear{Erdem and {\"{O}}ztok}{Erdem and
  {\"{O}}ztok}{2015}]{DBLP:journals/tplp/ErdemO15}
{\sc Erdem, E.} {\sc and} {\sc {\"{O}}ztok, U.} 2015.
\newblock Generating explanations for biomedical queries.
\newblock {\em {TPLP}\/}~{\em 15,\/}~1, 35--78.

\bibitem[\protect\citeauthoryear{Febbraro, Leone, Reale, and Ricca}{Febbraro
  et~al\mbox{.}}{2011}]{DBLP:conf/inap/FebbraroLRR11}
{\sc Febbraro, O.}, {\sc Leone, N.}, {\sc Reale, K.}, {\sc and} {\sc Ricca, F.}
  2011.
\newblock Unit testing in {ASPIDE}.
\newblock In {\em {INAP/WLP}}. Lecture Notes in Computer Science, vol. 7773.
  Springer, 345--364.

\bibitem[\protect\citeauthoryear{Febbraro, Reale, and Ricca}{Febbraro
  et~al\mbox{.}}{2011}]{DBLP:conf/lpnmr/FebbraroRR11}
{\sc Febbraro, O.}, {\sc Reale, K.}, {\sc and} {\sc Ricca, F.} 2011.
\newblock {ASPIDE:} integrated development environment for answer set
  programming.
\newblock In {\em {LPNMR}}. Lecture Notes in Computer Science, vol. 6645.
  Springer, 317--330.

\bibitem[\protect\citeauthoryear{Fraser, Beck, Caputo, Mackinnon, Newkirk, and
  Poole}{Fraser et~al\mbox{.}}{2003}]{DBLP:conf/xpu/FraserBCMNP03}
{\sc Fraser, S.}, {\sc Beck, K.~L.}, {\sc Caputo, B.}, {\sc Mackinnon, T.},
  {\sc Newkirk, J.}, {\sc and} {\sc Poole, C.} 2003.
\newblock Test driven development {(TDD)}.
\newblock In {\em {XP}}. Lecture Notes in Computer Science, vol. 2675.
  Springer, 459--462.

\bibitem[\protect\citeauthoryear{Gavanelli, Nonato, and Peano}{Gavanelli
  et~al\mbox{.}}{2015}]{DBLP:journals/logcom/GavanelliNP15}
{\sc Gavanelli, M.}, {\sc Nonato, M.}, {\sc and} {\sc Peano, A.} 2015.
\newblock An {ASP} approach for the valves positioning optimization in a water
  distribution system.
\newblock {\em J. Log. Comput.\/}~{\em 25,\/}~6, 1351--1369.

\bibitem[\protect\citeauthoryear{Gebser, Kaminski, Kaufmann, and Schaub}{Gebser
  et~al\mbox{.}}{2012}]{DBLP:series/synthesis/2012Gebser}
{\sc Gebser, M.}, {\sc Kaminski, R.}, {\sc Kaufmann, B.}, {\sc and} {\sc
  Schaub, T.} 2012.
\newblock {\em Answer Set Solving in Practice}.
\newblock Synthesis Lectures on Artificial Intelligence and Machine Learning.
  Morgan {\&} Claypool Publishers.

\bibitem[\protect\citeauthoryear{Gebser, Kaminski, K{\"{o}}nig, and
  Schaub}{Gebser et~al\mbox{.}}{2011}]{DBLP:conf/lpnmr/GebserKKS11}
{\sc Gebser, M.}, {\sc Kaminski, R.}, {\sc K{\"{o}}nig, A.}, {\sc and} {\sc
  Schaub, T.} 2011.
\newblock Advances in \emph{gringo} series 3.
\newblock In {\em {LPNMR}}. Lecture Notes in Computer Science, vol. 6645.
  Springer, 345--351.

\bibitem[\protect\citeauthoryear{Gebser, Maratea, and Ricca}{Gebser
  et~al\mbox{.}}{2015}]{DBLP:conf/lpnmr/GebserMR15}
{\sc Gebser, M.}, {\sc Maratea, M.}, {\sc and} {\sc Ricca, F.} 2015.
\newblock The design of the sixth answer set programming competition - - report
  -.
\newblock In {\em {LPNMR}}. Lecture Notes in Computer Science, vol. 9345.
  Springer, 531--544.

\bibitem[\protect\citeauthoryear{Gebser, Maratea, and Ricca}{Gebser
  et~al\mbox{.}}{2016}]{DBLP:conf/aaai/GebserMR16}
{\sc Gebser, M.}, {\sc Maratea, M.}, {\sc and} {\sc Ricca, F.} 2016.
\newblock What's hot in the answer set programming competition.
\newblock In {\em {AAAI}}. {AAAI} Press, 4327--4329.

\bibitem[\protect\citeauthoryear{Gebser, P{\"{u}}hrer, Schaub, and
  Tompits}{Gebser et~al\mbox{.}}{2008}]{DBLP:conf/aaai/GebserPST08}
{\sc Gebser, M.}, {\sc P{\"{u}}hrer, J.}, {\sc Schaub, T.}, {\sc and} {\sc
  Tompits, H.} 2008.
\newblock A meta-programming technique for debugging answer-set programs.
\newblock In {\em {AAAI}}. {AAAI} Press, 448--453.

\bibitem[\protect\citeauthoryear{Gebser, Schaub, Thiele, and Veber}{Gebser
  et~al\mbox{.}}{2011}]{DBLP:journals/tplp/GebserSTV11}
{\sc Gebser, M.}, {\sc Schaub, T.}, {\sc Thiele, S.}, {\sc and} {\sc Veber, P.}
  2011.
\newblock Detecting inconsistencies in large biological networks with answer
  set programming.
\newblock {\em {TPLP}\/}~{\em 11,\/}~2-3, 323--360.

\bibitem[\protect\citeauthoryear{Gelfond and Lifschitz}{Gelfond and
  Lifschitz}{1991}]{DBLP:journals/ngc/GelfondL91}
{\sc Gelfond, M.} {\sc and} {\sc Lifschitz, V.} 1991.
\newblock Classical negation in logic programs and disjunctive databases.
\newblock {\em New Generation Comput.\/}~{\em 9,\/}~3/4, 365--386.

\bibitem[\protect\citeauthoryear{Grasso, Leone, Manna, and Ricca}{Grasso
  et~al\mbox{.}}{2011}]{DBLP:conf/birthday/GrassoLMR11}
{\sc Grasso, G.}, {\sc Leone, N.}, {\sc Manna, M.}, {\sc and} {\sc Ricca, F.}
  2011.
\newblock {ASP} at work: Spin-off and applications of the {DLV} system.
\newblock In {\em Logic Programming, Knowledge Representation, and Nonmonotonic
  Reasoning}. Lecture Notes in Computer Science, vol. 6565. Springer, 432--451.

\bibitem[\protect\citeauthoryear{Junker}{Junker}{2004}]{DBLP:conf/aaai/Junker04}
{\sc Junker, U.} 2004.
\newblock {QUICKXPLAIN:} preferred explanations and relaxations for
  over-constrained problems.
\newblock In {\em {AAAI}}. {AAAI} Press, 167--172.

\bibitem[\protect\citeauthoryear{Koponen, Oikarinen, Janhunen, and
  S{\"{a}}il{\"{a}}}{Koponen
  et~al\mbox{.}}{2015}]{DBLP:journals/tplp/KoponenOJS15}
{\sc Koponen, L.}, {\sc Oikarinen, E.}, {\sc Janhunen, T.}, {\sc and} {\sc
  S{\"{a}}il{\"{a}}, L.} 2015.
\newblock Optimizing phylogenetic supertrees using answer set programming.
\newblock {\em {TPLP}\/}~{\em 15,\/}~4-5, 604--619.

\bibitem[\protect\citeauthoryear{Li, {De Vos}, Padget, Satoh, and Balke}{Li
  et~al\mbox{.}}{2015}]{DBLP:conf/iclp/LiVPSB15}
{\sc Li, T.}, {\sc {De Vos}, M.}, {\sc Padget, J.}, {\sc Satoh, K.}, {\sc and}
  {\sc Balke, T.} 2015.
\newblock Debugging {ASP} using {ILP}.
\newblock In {\em {ICLP} (Technical Communications)}. {CEUR} Workshop
  Proceedings, vol. 1433. CEUR-WS.org.

\bibitem[\protect\citeauthoryear{Lierler, Maratea, and Ricca}{Lierler
  et~al\mbox{.}}{2016}]{DBLP:journals/aim/LierlerMR16}
{\sc Lierler, Y.}, {\sc Maratea, M.}, {\sc and} {\sc Ricca, F.} 2016.
\newblock Systems, engineering environments, and competitions.
\newblock {\em {AI} Magazine\/}~{\em 37,\/}~3, 45--52.

\bibitem[\protect\citeauthoryear{Mikitiuk, Moseley, and Truszczynski}{Mikitiuk
  et~al\mbox{.}}{2007}]{DBLP:conf/icai/MikitiukMT07}
{\sc Mikitiuk, A.}, {\sc Moseley, E.}, {\sc and} {\sc Truszczynski, M.} 2007.
\newblock Towards debugging of answer-set programs in the language pspb.
\newblock In {\em {IC-AI}}. {CSREA} Press, 635--640.

\bibitem[\protect\citeauthoryear{Nielsen and Landauer}{Nielsen and
  Landauer}{1993}]{DBLP:conf/chi/NielsenL93}
{\sc Nielsen, J.} {\sc and} {\sc Landauer, T.~K.} 1993.
\newblock A mathematical model of the finding of usability problems.
\newblock In {\em {INTERCHI}}. {ACM}, 206--213.

\bibitem[\protect\citeauthoryear{Oetsch, P{\"{u}}hrer, and Tompits}{Oetsch
  et~al\mbox{.}}{2010}]{DBLP:journals/tplp/OetschPT10}
{\sc Oetsch, J.}, {\sc P{\"{u}}hrer, J.}, {\sc and} {\sc Tompits, H.} 2010.
\newblock Catching the ouroboros: On debugging non-ground answer-set programs.
\newblock {\em {TPLP}\/}~{\em 10,\/}~4-6, 513--529.

\bibitem[\protect\citeauthoryear{Oetsch, P{\"{u}}hrer, and Tompits}{Oetsch
  et~al\mbox{.}}{2011}]{DBLP:conf/lpnmr/OetschPT11}
{\sc Oetsch, J.}, {\sc P{\"{u}}hrer, J.}, {\sc and} {\sc Tompits, H.} 2011.
\newblock Stepping through an answer-set program.
\newblock In {\em {LPNMR}}. Lecture Notes in Computer Science, vol. 6645.
  Springer, 134--147.

\bibitem[\protect\citeauthoryear{Polleres, Fr{\"{u}}hst{\"{u}}ck, Schenner, and
  Friedrich}{Polleres et~al\mbox{.}}{2013}]{DBLP:conf/lpnmr/PolleresFSF13}
{\sc Polleres, A.}, {\sc Fr{\"{u}}hst{\"{u}}ck, M.}, {\sc Schenner, G.}, {\sc
  and} {\sc Friedrich, G.} 2013.
\newblock Debugging non-ground {ASP} programs with choice rules, cardinality
  and weight constraints.
\newblock In {\em {LPNMR}}. Lecture Notes in Computer Science, vol. 8148.
  Springer, 452--464.

\bibitem[\protect\citeauthoryear{Pontelli, Son, and El{-}Khatib}{Pontelli
  et~al\mbox{.}}{2009}]{DBLP:journals/tplp/PontelliSE09}
{\sc Pontelli, E.}, {\sc Son, T.~C.}, {\sc and} {\sc El{-}Khatib, O.} 2009.
\newblock Justifications for logic programs under answer set semantics.
\newblock {\em {TPLP}\/}~{\em 9,\/}~1, 1--56.

\bibitem[\protect\citeauthoryear{Schulz, Satoh, and Toni}{Schulz
  et~al\mbox{.}}{2015}]{DBLP:conf/lpnmr/0001ST15}
{\sc Schulz, C.}, {\sc Satoh, K.}, {\sc and} {\sc Toni, F.} 2015.
\newblock Characterising and explaining inconsistency in logic programs.
\newblock In {\em {LPNMR}}. Lecture Notes in Computer Science, vol. 9345.
  Springer, 467--479.

\bibitem[\protect\citeauthoryear{Schulz and Toni}{Schulz and
  Toni}{2016}]{DBLP:journals/tplp/0001T16}
{\sc Schulz, C.} {\sc and} {\sc Toni, F.} 2016.
\newblock Justifying answer sets using argumentation.
\newblock {\em {TPLP}\/}~{\em 16,\/}~1, 59--110.

\bibitem[\protect\citeauthoryear{Shchekotykhin}{Shchekotykhin}{2015}]{DBLP:conf/aaai/Shchekotykhin15}
{\sc Shchekotykhin, K.~M.} 2015.
\newblock Interactive query-based debugging of {ASP} programs.
\newblock In {\em {AAAI}}. {AAAI} Press, 1597--1603.

\bibitem[\protect\citeauthoryear{Silva and Sakallah}{Silva and
  Sakallah}{1999}]{DBLP:journals/tc/Marques-SilvaS99}
{\sc Silva, J. P.~M.} {\sc and} {\sc Sakallah, K.~A.} 1999.
\newblock {GRASP:} {A} search algorithm for propositional satisfiability.
\newblock {\em {IEEE} Trans. Computers\/}~{\em 48,\/}~5, 506--521.

\bibitem[\protect\citeauthoryear{Simons, Niemel{\"{a}}, and Soininen}{Simons
  et~al\mbox{.}}{2002}]{DBLP:journals/ai/SimonsNS02}
{\sc Simons, P.}, {\sc Niemel{\"{a}}, I.}, {\sc and} {\sc Soininen, T.} 2002.
\newblock Extending and implementing the stable model semantics.
\newblock {\em Artif. Intell.\/}~{\em 138,\/}~1-2, 181--234.

\bibitem[\protect\citeauthoryear{Syrj\"{a}nen}{Syrj\"{a}nen}{2006}]{Syrjanen2006}
{\sc Syrj\"{a}nen, T.} 2006.
\newblock {Debugging Inconsistent Answer Set Programs}.
\newblock In {\em NMR}. 77--84.

\end{thebibliography}

\end{document}